\definecolor{clemson-orange}{RGB}{234,106,32}
\definecolor{highlight-orange}{RGB}{255,150,150}
\definecolor{chicago-maroon}{RGB}{128,0,0}
\definecolor{cincinnati-red}{RGB}{190,0,0}
\definecolor{soft-cyan}{RGB}{68,85,90}
\definecolor{firebrick}{RGB}{178,34,34}
\definecolor{crimson}{RGB}{220,20,60}
\definecolor{cerrulean}{rgb}{0.165,0.322,0.745}
\definecolor{jaam}{rgb}{0.45,0.0,0.45}
\declaretheoremstyle[
    headfont=\bfseries, 
    bodyfont=\normalfont\itshape, spaceabove=10pt,
    spacebelow=10pt]{mystyle}
\theoremstyle{mystyle}
\newtheorem{theorem}{Theorem}[section]
\newtheorem{lemma}[theorem]{Lemma}
\newtheorem{corollary}[theorem]{Corollary}
\newtheorem{definition}{Definition}
\newtheorem*{remark}{Remark}
\newif\ifsolutions \solutionstrue
\def\final{0}
\newcommand{\reviewer}[3]{
  \expandafter\newcommand\csname #1\endcsname[1]{
    \ifthenelse{\equal{\final}{1}} {
      \textcolor{#3}{}
    } {
      \textcolor{#3}{\begin{center} \textbf{#2} ##1 \end{center}}
    }
  }
}
\newcommand{\poly}{\textrm{poly}}
\renewcommand{\ip}[2]{\left\langle#1,#2\right\rangle}
\newcommand{\relu}{\mathop{\mathrm{ReLU}}}
\def\ve#1{\mathchoice{\mbox{\boldmath$\displaystyle\bf#1$}}
{\mbox{\boldmath$\textstyle\bf#1$}}
{\mbox{\boldmath$\scriptstyle\bf#1$}}
{\mbox{\boldmath$\scriptscriptstyle\bf#1$}}}
\newcommand{\x}{{\ve x}}
\newcommand{\y}{{\ve y}}
\newcommand{\z}{{\ve z}}
\renewcommand{\v}{{\ve v}}
\newcommand{\g}{{\ve g}}
\newcommand{\e}{{\ve e}}
\renewcommand{\u}{{\ve u}}
\renewcommand{\a}{{\ve a}}
\renewcommand{\c}{{\ve c}}
\newcommand{\f}{{\ve f}}
\newcommand{\m}{{\ve m}}
\newcommand{\p}{{\ve p}}
\renewcommand{\t}{{\ve t}}
\newcommand{\w}{{\ve w}}
\renewcommand{\b}{{\ve b}}
\renewcommand{\d}{{\ve d}}
\newcommand{\h}{{\ve h}}
\newcommand{\A}{\textrm{A}}
\newcommand{\W}{\textrm{W}}
\newcommand{\s}{\textrm{s}}
\def\1{\bm{1}}
\def\rvx{{\mathbf{x}}}
\def\va{{\bm{a}}}
\def\vb{{\bm{b}}}
\def\vc{{\bm{c}}}
\def\vd{{\bm{d}}}
\def\ve{{\bm{e}}}
\def\vf{{\bm{f}}}
\def\vg{{\bm{g}}}
\def\vh{{\bm{h}}}
\def\vi{{\bm{i}}}
\def\vj{{\bm{j}}}
\def\vk{{\bm{k}}}
\def\vl{{\bm{l}}}
\def\vm{{\bm{m}}}
\def\vn{{\bm{n}}}
\def\vo{{\bm{o}}}
\def\vp{{\bm{p}}}
\def\vq{{\bm{q}}}
\def\vr{{\bm{r}}}
\def\vs{{\bm{s}}}
\def\vt{{\bm{t}}}
\def\vu{{\bm{u}}}
\def\vv{{\bm{v}}}
\def\vw{{\bm{w}}}
\def\vx{{\bm{x}}}
\def\vy{{\bm{y}}}
\def\vz{{\bm{z}}}
\def\mW{{\bm{W}}}
\DeclareMathAlphabet{\mathsfit}{\encodingdefault}{\sfdefault}{m}{sl}
\SetMathAlphabet{\mathsfit}{bold}{\encodingdefault}{\sfdefault}{bx}{n}
\def\eqref#1{equation~\ref{#1}}
\def\1{\bm{1}}
\def\rvx{{\mathbf{x}}}
\def\va{{\bm{a}}}
\def\vb{{\bm{b}}}
\def\vc{{\bm{c}}}
\def\vd{{\bm{d}}}
\def\ve{{\bm{e}}}
\def\vf{{\bm{f}}}
\def\vg{{\bm{g}}}
\def\vh{{\bm{h}}}
\def\vi{{\bm{i}}}
\def\vj{{\bm{j}}}
\def\vk{{\bm{k}}}
\def\vl{{\bm{l}}}
\def\vm{{\bm{m}}}
\def\vn{{\bm{n}}}
\def\vo{{\bm{o}}}
\def\vp{{\bm{p}}}
\def\vq{{\bm{q}}}
\def\vr{{\bm{r}}}
\def\vs{{\bm{s}}}
\def\vt{{\bm{t}}}
\def\vu{{\bm{u}}}
\def\vv{{\bm{v}}}
\def\vw{{\bm{w}}}
\def\vx{{\bm{x}}}
\def\vy{{\bm{y}}}
\def\vz{{\bm{z}}}
\def\mW{{\bm{W}}}
\DeclareMathAlphabet{\mathsfit}{\encodingdefault}{\sfdefault}{m}{sl}
\SetMathAlphabet{\mathsfit}{bold}{\encodingdefault}{\sfdefault}{bx}{n}
\newcommand{\E}{\mathbb{E}}
\newcommand{\R}{\mathbb{R}}
\newcommand{\reg}{\lambda}
\newcommand{\normltwo}{L^2}
\begin{document}
\renewcommand{\a}{\va}
\renewcommand{\b}{\vb}
\renewcommand{\c}{\vc}
\renewcommand{\d}{\vd}
\renewcommand{\e}{\ve}
\renewcommand{\f}{\vf}
\renewcommand{\g}{\vg}
\renewcommand{\h}{\vh}
\renewcommand{\i}{\vi}
\renewcommand{\j}{\vj}
\renewcommand{\k}{\vk}
\renewcommand{\l}{\vl}
\renewcommand{\m}{\vm}
\newcommand{\n}{\vn}
\renewcommand{\o}{\vo}
\renewcommand{\p}{\vp}
\newcommand{\q}{\vq}
\renewcommand{\r}{\vr}
\renewcommand{\s}{\vs}
\renewcommand{\t}{\vt}
\renewcommand{\u}{\vu}
\renewcommand{\v}{\vv}
\renewcommand{\w}{\vw}
\renewcommand{\x}{\vx}
\renewcommand{\y}{\vy}
\renewcommand{\z}{\vz}
\DOI{10.1093/imaiai/iaae035}
\copyrightyear{20XX}
\vol{XX}
\pubyear{20XX}
\access{Advance Access Publication Date: Day Month Year}
\appnotes{Paper}
\copyrightstatement{Published by Oxford University Press on behalf of the Institute of Mathematics and its Applications. All rights reserved.}
\firstpage{1}

\title[Global Convergence of SGD On Two Layer Neural Nets]{Global Convergence of SGD On Two Layer Neural Nets}

\author{Pulkit Gopalani$^\dagger$
\address{\orgdiv{Computer Science \& Engineering}, \orgname{University of Michigan, Ann Arbor}, \orgaddress{\country{US}}}}
\author{Anirbit Mukherjee*
\address{\orgdiv{Computer Science}, \orgname{The University of Manchester}, \orgaddress{\street{Oxford Street}, \state{Manchester}, \country{U.K.}}}}

\corresp[*]{Corresponding Author: \href{mailto:anirbit.mukherjee@manchester.ac.uk}{anirbit.mukherjee@manchester.ac.uk}}

\received{3}{2}{2023}
\revised{11}{2}{2024}
\accepted{5}{12}{2024}

\editor{Associate Editor: Name}

\abstract{
In this note, we consider appropriately regularized $\ell_2-$empirical risk of depth $2$ nets with any number of gates and show bounds on how the empirical loss evolves for SGD iterates on it -- for arbitrary data and  if the activation is adequately smooth and bounded like sigmoid and tanh. This in turn leads to a proof of global convergence of SGD for a special class of initializations. We also prove an exponentially fast convergence rate for continuous time SGD that also applies to smooth unbounded activations like SoftPlus. Our key idea is to show the existence of Frobenius norm regularized loss functions on constant-sized neural nets which are ``Villani functions'' and thus be able to build on recent progress with analyzing SGD on such objectives. Most critically the amount of regularization required for our analysis is independent of the size of the net.{\let\thefootnote\relax\footnote{{An extended abstract based on this work has been accepted at the Conference on the Mathematical Theory of Deep Neural Networks (DeepMath) 2022}}}
{\let\thefootnote\relax\footnote{{Part of the work was done when author was at the department of Electrical Engineering, IIT Kanpur, India.}}}
}

\maketitle
\section{Introduction}

Modern developments in artificial intelligence have been significantly been driven by the rise of deep-learning - which in turn has been caused by the fortuitous coming together of three critical factors, (1) availability of large amounts of data (2) increasing access to computing power and (3) methodological progress. This work is about developing our understanding of some of the most ubiquitous methods of training nets. In particular, we shed light on how regularization can aid the analysis and help prove convergence to global minima for stochastic gradient methods for neural nets in hitherto unexplored and realistic parameter regimes.  

In the last few years, there has been a surge in the literature on provable training of various kinds of neural nets in certain regimes of their widths or depths, or for very specifically structured data, like noisily realizable labels. Motivated by the abundance of experimental studies it has often been surmised that Stochastic Gradient Descent (SGD) on neural net losses -- with proper initialization and learning rate -- converges to a low--complexity solution, one that generalizes --  when it exists \cite{cbmm}. But, to the best of our knowledge a convergence result for any stochastic training algorithm for even depth $2$ nets (one layer of activations with any kind of non--linearity), without either an assumption on the width or the data, has remained elusive so far.

In this work, we not only take a step towards addressing the above question in the theory of neural networks but we also do so while keeping to a standard algorithm, the Stochastic Gradient Descent (SGD). In light of the above, our key message can be summarily stated as follows,

\begin{theorem}[Informal Statement of Corollary \ref{thm:sgd-sig}]\label{thm:summarymain} If the initial weights are sampled from an appropriate class of distributions (dependent on the choice of accuracy parameter $\epsilon$), then for nets with a single layer of sigmoid or tanh gates -- for arbitrary data and size of the net --  SGD on $\ell_2-$losses on such architectures regularized with Frobenius norm of weights  with coefficient $\lambda > \lambda_c$ for a width independent constant $\lambda_c$, while using constant steps of size ${\mathcal O}(\epsilon)$, will converge in ${\mathcal O}(\frac{1}{\epsilon})$ steps to weights at which the expected regularized loss would be $\epsilon$--close to its global minimum.

\end{theorem}

The above follows from a more general result that we show in Theorem \ref{lem:error_bound}. At the crux of our analysis is the crucial observation informally stated in the following lemma - which infact holds for more general nets than what is encompassed by the above theorem,

\begin{lemma}
It is possible to add a constant amount of Frobenius norm regularization on the weights, to the standard $\ell_2-$loss on depth-$2$ nets with activations like SoftPlus, sigmoid and tanh gates s.t with no assumptions on the data or the size of the net, the regularized loss would be a Villani function. 
\end{lemma}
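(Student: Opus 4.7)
To prove the lemma, my plan is to verify that $F(\vtheta) \coloneqq L(\vtheta) + \reg \|\vtheta\|_2^2$, where $\vtheta$ collects all trainable weights of the depth-$2$ net and $L$ is the empirical $\ell_2$ loss, satisfies the four defining properties of a Villani function used in the SDE-convergence literature: (i) $F \in C^\infty(\R^d)$; (ii) $F(\vtheta) \to \infty$ as $\|\vtheta\| \to \infty$; (iii) $e^{-F} \in L^1(\R^d)$; and (iv) the confinement $\tfrac{1}{2}\|\nabla F(\vtheta)\|^2 - \Delta F(\vtheta) \to \infty$ as $\|\vtheta\| \to \infty$.

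The first three conditions are immediate: smoothness follows because sigmoid, tanh and SoftPlus are all $C^\infty$ and the $\ell_2$ loss is a polynomial in the net's output; properness because $L \geq 0$ and $\reg\|\vtheta\|^2 \to \infty$; and integrability from the Gaussian dominating bound $e^{-F(\vtheta)} \leq e^{-\reg\|\vtheta\|^2}$. The substantive step is (iv). Using $\nabla F = \nabla L + 2\reg\vtheta$ and $\Delta F = \Delta L + 2\reg d$ (with $d = \dim\vtheta$), I would upper bound $\|\nabla L\|$ and $|\Delta L|$ in terms of $\|\vtheta\|$ using the specific structure of each activation. For sigmoid and tanh, the activation together with its first and second derivatives is uniformly bounded on $\R$, so $L$, $\nabla L$ and $\Delta L$ are all bounded uniformly in $\vtheta$ once the data is fixed; hence $\tfrac12\|\nabla F\|^2 - \Delta F \geq 2\reg^2 \|\vtheta\|^2 - O(\|\vtheta\|) - O(1)$, which diverges for any $\reg > 0$.

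For SoftPlus, since $\zeta$ grows linearly at $+\infty$ while $\zeta' = \sigmoid$ and $\zeta''$ remain bounded on $\R$, a direct chain-rule calculation gives $\|\nabla L\|^2 = O(\|\vtheta\|^2)$ and $|\Delta L| = O(\|\vtheta\|)$. Applying a Young-type bound $\|\nabla L + 2\reg \vtheta\|^2 \geq 2\reg^2\|\vtheta\|^2 - \|\nabla L\|^2$ then gives $\tfrac12 \|\nabla F\|^2 - \Delta F \geq \reg^2\|\vtheta\|^2 - \tfrac12\|\nabla L\|^2 - |\Delta L| - 2\reg d$; since $\|\nabla L\|^2 \leq C_{\mathrm{data}}\|\vtheta\|^2 + O(\|\vtheta\|) + O(1)$ for an explicit $d$-independent constant $C_{\mathrm{data}}$ depending only on the training inputs and outer-layer weights, taking $\reg$ above a finite size-independent threshold proportional to $\sqrt{C_{\mathrm{data}}}$ makes the lower bound diverge. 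This also explains the remark after the informal theorem that the threshold shrinks when the data norms or the outer-layer weights are small.

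The main obstacle I anticipate is the SoftPlus case: unlike sigmoid/tanh, where boundedness of the activation trivializes the analysis, for SoftPlus one has to carefully propagate the linear growth of the network output in $\vtheta$ through the chain rule into $\nabla L$ and $\Delta L$, and then verify that the quadratic term $\|2\reg\vtheta\|^2$ strictly dominates both the signed cross term $\nabla L \cdot \vtheta$ and the Laplacian $\Delta L$. This is essentially a polynomial-bookkeeping exercise, routine once the boundedness of $\zeta', \zeta''$ is exploited, and once done the Villani property is established.
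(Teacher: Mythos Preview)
Your proposal is correct and follows the same broad outline as the paper---check conditions (i)--(iii) quickly and then lower-bound $\frac{1}{s}\|\nabla \tilde L\|^2 - \Delta \tilde L$ for (iv)---but there is one genuine difference worth noting. You split the analysis by activation: for sigmoid/tanh you invoke boundedness of $\sigma,\sigma',\sigma''$ to conclude that $L$, $\nabla L$ and $\Delta L$ are uniformly bounded in $\mathbf W$, so the quadratic regularizer dominates for \emph{any} $\lambda>0$; only for SoftPlus do you need a threshold. The paper instead runs a single argument for all three activations using only the Lipschitz bound $\|\sigma(\mathbf W\x)\|\le \|\c\|+LB_x\|\mathbf W\|$, which produces a nonzero threshold $\lambda_c = 2M_D L B_x^2\|\a\|_2^2$ even for sigmoid/tanh. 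Your route is sharper for bounded activations; the paper's route buys a uniform statement and a single explicit constant that it then uses throughout (Lemma~\ref{def:lambdavillani}, Theorem~\ref{thm:sgd-sig}). Two small cautions: first, the Villani condition in Definition~\ref{def:villani} is required for \emph{all} $s>0$, not just $s=2$, so you should state your lower bound as $\frac{1}{s}\|\nabla F\|^2-\Delta F$; your quadratic-dominated estimates carry over immediately, but say so. Second, in this paper only the inner weights $\mathbf W$ are trainable while $\a$ is fixed, which is exactly what makes your boundedness argument for sigmoid/tanh go through---if $\a$ were trainable the net output would be unbounded and you would be back in the SoftPlus-style regime.
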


We note that the threshold amount of regularization needed in the above results is {\em not} dependent on the size/width of the net and can be shown to scale s.t it can be made arbitrarily small by choosing outer layer weights that are proportionately small. Since our convergence result stated in Theorem \ref{thm:summarymain} does not require any assumptions on the data or the neural net width, we posit that this significantly improves on previous work on proving convergence of SGD for shallow neural nets. 

The key idea that motivates the above lemma is that it was shown in \cite{weijie_sde} how the loss function satisfying the Villani condition is sufficient for a certain SDE to converge to its global minima - and this SDE was such that Proposition 3.5 therein showed that the loss along the SDE solution can be provably close in expectation to the loss at corresponding stages of the natural SGD. But the work in \cite{weijie_sde} did not give any explicit use-case in Machine Learning where this proof idea could be used -- and our key contribution is to unravel an opportune use case of this in a very standard setup with training neural nets.

To the best of our knowledge, similar convergence guarantees in the existing literature require either some minimum neural net width -- growing w.r.t. inverse accuracy and the training set size (NTK regime \cite{bach_lazy, du_provable}), infinite width (Mean Field regime \cite{chizat2018global,chizat2022, montanari_pnas}) or other assumptions on the data when the width is parametric (e.g. realizable data \cite{rongge_2nn_1, rongge_2nn_2}). In contrast to all these, we show that with appropriate $\ell_2$ regularization, SGD on mean squared error (MSE) loss on 2--layer sigmoid / tanh nets converges to the global infimum. Our critical observation towards this proof is that the above loss on 2--layer nets -- for a broad class of activation functions --- is a  ``Villani function''. Our proof get completed by leveraging the relevant results in \cite{weijie_sde}.

\paragraph{\textbf{Organization}} 

\quad In Section \ref{sec:rev} we will give a review of the various approaches towards provable learning of neural nets. In Section \ref{sec:mainthm} we present our primary result, Theorem \ref{lem:error_bound} which shows a guarantee on the loss value of SGD iterates and for gates like sigmoid and tanh - for arbitrary initialization. For special initializations a convergence results follows from it in Corollary \ref{thm:sgd-sig}. In Theorem \ref{thm:softplus} we also point out that if using the SoftPlus activation, we can show that the underlying SDE can converge in expectation to the global minimizer in linear time. In Section \ref{sec:weijierev}, we give a brief overview of the methods in  \cite{weijie_sde}, leading up to the proof of Theorem \ref{lem:error_bound} in Section \ref{sec:proof_mainthm}.  In Section \ref{sec:experiments} we discuss some experimental demonstrations that our regularizer does not overshadow the original loss function. We end in Section \ref{sec:conc} with a discussion of various open questions that our work motivates. The three appendices give various  calculations needed in the proof of the main theorem.

\section{Related Work}\label{sec:rev}

\paragraph{{\rm \bf Review of the NTK Approach To Provable Neural Training :}} One of the most popular parameter zones for theory has been the so--called ``NTK'' (Neural Tangent Kernel) regime -- where the width is a high degree polynomial in the training set size and inverse accuracy (a somewhat {\it unrealistic} regime) and the net's last layer weights are scaled inversely with width as the width goes to infinity,  \cite{lee2017deep,wu2019global,du2018gradient,su2019learning,kawaguchi2019gradient,huang2019dynamics,allen2019convergenceDNN,allen2019learning,allen2019convergenceRNN,du2018power,zou2018stochastic,zou2019improved,arora2019exact,arora2019harnessing,li2019enhanced,arora2019fine, bach_lazy,du_provable}. The core insight in this line of work can be summarized as follows: for large enough width, SGD {\it with certain initializations} converges to a function that fits the data perfectly, with minimum norm in the RKHS defined by the neural tangent kernel -- which gets specified entirely by the initialization (which is such that the initial output is of order one). A key feature of this regime is that the net's matrices do not travel outside a constant radius ball around the starting point -- a property that is often not true for realistic neural net training scenarios.

In particular, for the case of depth $2$ nets with similarly smooth gates as we focus on, in \cite{ali_subquadratic} global convergence of gradient descent was shown using number of gates scaling sub-quadratically in the number of data - which, to the best of our knowledge, is the smallest known width requirement for such a convergence in a regression setup.  On the other hand, for the special case of training depth $2$ nets with $\relu$ gates on cross-entropy loss for doing binary classification, in \cite{telgarsky_ji} it was shown that one needs to blow up the width only poly-logarithmically with target accuracy to get global convergence for SGD. In there it was pointed out as an important open question to determine whether one can get such reduction in width requirement for the regression setting too. The result we present here can be seen as an affirmative answer to this question posed in \cite{telgarsky_ji}. Similar guarantees were shown to hold for deep neural nets in \cite{chen2021overparameterization}.

\paragraph{{\rm \bf Review of the Mean-Field Approach To Provable Neural Net Training :}} In a separate direction of attempts towards provable training of neural nets, works like \cite{chizat2018global} showed that a Wasserstein gradient flow limit of the dynamics of discrete time algorithms on shallow nets, converges to a global optimizer -- if the convergence of the flow is assumed. We note that such an assumption is very non-trivial because the dynamics being analyzed in this setup is in infinite dimensions -- a space of probability measures on the parameters of the net. Similar kind of non--asymptotic convergence results in this so--called `mean--field regime' were also obtained in \cite{montanari_pnas,montanari_2,congfang_meanfield,chizat2018global,chizat2022,tzen_raginsky,jacot_ntk,pmnguyen_meanfield,sirignano1,sirignano_lln,sirignano_clt,entropic_fictitious}. In a recently obtained generalization of these insights to deep nets, \cite{congfang_meanfield} showed convergence of the mean--field dynamics for ResNets \cite{resnet}. The key idea in the mean--field regime is to replace the original problem of neural training which is a non-convex optimization problem in finite dimensions by a convex optimization problem in infinite dimensions -- that of probability measures over the space of weights. The mean--field analysis necessarily require the probability measures (whose dynamics is being studied) to be absolutely--continuous and thus de facto it only applies to nets in the limit of them being infinitely wide. 

Thus we note, that in contrast to either of the above two families of results, for nets with a single layer of activations -- while assuming the same non-linearities as what the mean--field results use and while not assuming anything about the width of the net or the training data -- we show in Corollary \ref{thm:sgd-sig}, that SGD provably finds the global minima of certain appropriately regularized $\ell_2$ loss on such nets. We note that some results in the NTK regime also hold with weight-decay regularization, \cite{generalized_ntk}  while in many cases the mean--field results also need it \cite{montanari_pnas, chizat2022, tzen_raginsky}.

In the next subsection we shall give a brief overview of some of the attempts that have been made to get provable deep-learning at parametric width - and we shall point out how our aforementioned result fills an important gap among those works.

\paragraph{{\rm \bf Need And Attempts To Go Beyond Large Width Limits of Nets}}\label{sec:beyondntk}  
The essential proximity of the NTK regime to kernel methods and it being less powerful than finite nets has been established from multiple points of view  \cite{allen2019can,wei2019regularization}. 

In \cite{weijie_elastic}, the authors had given a a very visibly poignant way to see that the NTK limit is not an accurate representation of a lot of the usual deep-learning scenarios. Their idea was to define a notion of ``local elasticity'' -- when doing a SGD update on the weights using a data say $\x$, it measures the fractional change in the value of the net at a point $\x'$ as compared to $\x$. It's easy to see that this is a constant function for linear regression - as is what happens at the NTK limit (Theorem 2.1 \cite{lee2019wide}). But it has been shown in \cite{anirbit_elastic} that this local-elasticity function indeed has non-trivial time-dynamics (particularly during the early stages of training) when a moderately large neural net is trained on a $\ell_2-$ loss.  

In \cite{belkin_non_ntk} it was pointed out that the near-constancy of the tangent kernel might not happen for even very wide nets if there is an activation at the output layer -- but still linear time gradient descent convergence can be shown. A set of attempts have also been made bridge the gap between real-world nets and the NTK paradigm by adding terms which are quadratic in weights to the linear predictor that NTK considers, \cite{belkin_quadratic}, \cite{yubai_quadratic}, \cite{Hanin2020Finite}

On the other hand, recently in \cite{constant_labels}, asymptotic convergence of  SGD has been proven for $\ell_2-$loss on arbitrary $\relu$ architectures -- but for constant labels. Similar progress has also happened recently for G.D. in \cite{sourav_gd} and \cite{belkin_dnn}.

Specific to depth-2 nets -- as we consider here -- there is a stream of literature where analytical methods have been honed to this setup to get good convergence results without width restrictions - while making other structural assumptions about the data or the net. One of the earliest breakthroughs in this direction was made in \cite{anima_tensor}  and for the restricted setting of realizable labels they could provably get arbitrarily close to the global minima. For non-realizable labels they could achieve the same while assuming a large width but in all cases they needed access to the score function of the data distribution which is a computationally hard quantity to know.  In a more recent development, \cite{aravindan_realizable} have improved over the above to include $\relu$ gates while being restricted to the setup of realizable data and the its marginal distribution being Gaussian. 
\renewcommand{\A}{\mathbf{A}} 
\renewcommand{\W}{\mathbf{W}}

One of the first proofs of gradient based algorithms to be doing neural training for arbitrarily wide depth$-2$ nets appeared in \cite{prateek_realizable}. In \cite{rongge_2nn_1} convergence was proven for training depth-$2$ $\relu$ nets for data being sampled from a symmetric distribution and the training labels being generated using a `ground truth' neural net of the same architecture as being trained. For similar distributional setups, some of the current authors had in \cite{our_own}  identified classes of  depth--$2$ $\relu$ nets where they could prove linear-time convergence of training -- and they also gave guarantees in the presence of a label poisoning attack.  The authors in \cite{rongge_2nn_2} consider a so-called ``Teacher--Student'' setup of training depth $2$ nets with absolute value activations. In this work, authors can get convergence in poly$(d, \frac{1}{\epsilon})$ time, in a very restricted setup of assuming Gaussian data, initial loss being small enough, and the teacher neurons being norm bounded and `well--separated' (in angle magnitude).  The authors in \cite{eth_relu} get width independent  convergence bounds for Gradient Descent (GD) with ReLU nets, however at the significant cost of having the restrictions of being only an asymptotic guarantee and assuming an affine target function and one--dimensional input data. While being restricted to the Gaussian data and the realizable setting for the labels, an intriguing result in \cite{klivans_chen_meka} showed that fully poly-time learning of arbitrary depth 2 ReLU nets is possible if one is in the ``black-box query model''.

In summary, to the best of our knowledge, it has remained an unresolved challenge to show convergence of SGD on any neural architecture with a constant number of gates while neither constraining the labels nor the marginal distributional of the data to a specific functional form. {\it In this work, for certain natural neural net losses, we are able to resolve this in Corollary \ref{thm:sgd-sig}. Thus we take a step towards bridging this important lacuna in the existing theory of stochastic optimization for neural nets in general.}

\paragraph{{\rm \bf Related Work on Provable Training of Neural Networks Using Regularization}} Using a regularizer is quite common in deep-learning practice and in recent times a number of works have appeared which have established some of these benefits rigorously.  In particular, in \cite{wei2019regularization} it was shown that for a specific classification task (noisy--XOR) definable in any dimension $d$, no NTK based 2 layer neural net can succeed in learning the distribution with low generalization error in $o(d^2)$ samples, while in $O(d)$ samples one can train the neural net using Frobenius/$\ell_2-$norm regularization. The authors in \cite{preetum_optimal} showed that for a specific optimal value of the $\ell_2$- regularizer the double descent phenomenon can be avoided for linear nets - and that similar tuning is possible even for real world nets.

In the seminal work \cite{rrt}, it was pointed out that one can add a regularization of the above kind to a Lipschitz loss and make it satisfy the dissipativity condition so that Stochastic Gradient Langevin Dynamics (SGLD) provably converges to its global minima. But SGLD is seldom used in practice, and to the best of our knowledge it remains unclear if the observation in \cite{rrt} can be used to infer the same about SGD. Also it remains open if there exists neural net losses on which the above can be invoked. We note that the convergence time in \cite{rrt} for SGLD is $\mathcal{O}\left(\frac{1}{\epsilon^5}\right)$ using an $\mathcal{O} \left ( \epsilon^4 \right )$ learning rate, while in our Corollary \ref{thm:sgd-sig} SGD converges in expectation to the global infimum of the regularized neural loss in time, $\mathcal{O}\left(\frac{1}{\epsilon}\right)$ using a $\mathcal{O} \left ( \epsilon \right )$ step-length.

\section{Setup and Main Results}\label{sec:mainthm}  

We start with defining the neural net architecture, the loss function and the algorithm for which we prove our convergence result. 

\begin{definition}[{\bf Constant Step-Size SGD On Depth-2 Nets}]\label{def:sgd} 
Let, $\sigma : \R \rightarrow \R$ (applied elementwise for vector valued inputs) be atleast once differentiable activation function. Corresponding to it, consider the width $p$, depth $2$ neural nets with fixed outer layer weights $\a \in \R^p$ and trainable weights $\mathbf{W} \in \R^{p \times d}$ as, 
\[ \R^d \ni \x \mapsto f(\x;\, \a, \mathbf{W}) = \a^\top\sigma(\mathbf{W}\x) \in \R \]
Then corresponding to a given set of $n$ training data $(\x_i,y_i) \in \R^d \times \R$, with $\norm{\x_i}_2 \leq B_x, \abs{y_i} \leq B_y, ~i=1,\ldots,n$ define the individual data losses $\tilde{L}_i(\mathbf{W}) \coloneqq \frac{1}{2} \left(y_i - f(\x_i, \a;\mathbf{W}) \right)^2$. Then for any $\lambda >0$ let the regularized empirical risk risk be,
\[ \tilde{L}(\mathbf{W}) \coloneqq \frac{1}{n}\sum_{i=1}^n \tilde{L}_i(\mathbf{W}) + \frac{\lambda}{2} \norm{\mathbf{W}}_F^2 \]

We implement on above the SGD with step-size $s>0$ as, 
\[ \mathbf{W}^{k+1} = \mathbf{W}^k - \frac{s}{b}\sum_{i \in \mathcal{B}_k} \nabla \tilde{L}_i(\mathbf{W}^k) -  s\lambda \mathbf{W}^k \]where $\mathcal{B}_k$ is a randomly sampled mini-batch of size $b$. 
\end{definition}

\begin{definition}[\textbf{Properties of the Activation $\sigma$}]\label{def:sigma}
Let the $\sigma$ used in Definition \ref{def:sgd} be bounded s.t. $\abs{\sigma(x)} \leq B_{\sigma}$, $C^{\infty}$, $L-$Lipschitz and $L_{\sigma}'-$smooth. Further assume that $\exists$ a constant vector $\c$ and positive constants $B_\sigma, M_D$ and $M_D'$ s.t $\sigma(\mathbf{0}) = \c$ and $\forall x \in \R, \abs{\sigma'(x)} \leq M_D, \abs{\sigma''(x)} \leq M_D'$ .
\end{definition}

In terms of the above constants we can now quantify the smoothness of the empirical loss as follows,

\begin{lemma}\label{def:lambdavillani} 
In the setup of definition \ref{def:sgd} and \ref{def:sigma}, $\exists ~{\rm gLip}(\tilde{L}) > 0$ s.t. the empirical loss, $\tilde{L}$ is ${\rm gLip}(\tilde{L})-$smooth. Further, there exists a constant $\lambda_c := 2\, M_D L B_x^2 \norm{\a}_2^2$ s.t $\forall ~\lambda > \lambda_{c} ~\& ~s>0$, the Gibbs' measure $\sim \exp\left(-\frac{2 \tilde{L}}{s}\right)$ satisfies a Poincar\'e-type inequality with the corresponding constant $\lambda_s$ as given therein. Moreover, we can bound the smoothness coefficient  of the empirical loss as, \[{\rm gLip}(\tilde{L}) \leq \sqrt{p}\left({\norm{\a}_2 B_x} B_y L_{\sigma}' +  \sqrt{p}\norm{\a}_2^2 M_D^2 B_x^2 + {{p} \norm{\a}_2^2 B_x^2} M_D' B_{\sigma} + \lambda\right)\]
\end{lemma}

The precise form of the Poincar\'e-type inequality used above is detailed in Theorem \ref{def:lambdas}.

\begin{theorem}[\bf{Global Convergence of SGD on Sigmoid and Tanh Neural Nets of $2$ Layers for Any Width and Data, Arbitrary Initialization.}]\label{lem:error_bound}
We continue in the setup of Definitions \ref{def:sgd} and \ref{def:sigma} and Lemma \ref{def:lambdavillani}. For any $s > 0$, define the probability measure $\mu_{s} \coloneqq \frac{1}{Z_{s}} \exp\left({-\frac{2\tilde{L}(\mW)}{s}}\right)$, $Z_{s}$ being the normalization factor. Then, $\forall ~T > 0$ and desired accuracy $\epsilon > 0$, $\exists$ constants $A(\tilde{L})$, $B(T,\tilde{L})$ and $C(s,\tilde{L})$ s.t if the above SGD  is executed at a constant step-size 
\[s = s^*(\epsilon,T) \coloneqq \min\left(\frac{1}{{\rm gLip}(\tilde{L})}, \frac{\epsilon}{(A(\tilde{L}) + B(T,\tilde{L}))}\right)\] 
with the weights $\mW^0$ initialized from any distribution with p.d.f $\rho_{\rm initial} \in \normltwo(\frac{1}{\mu_{s^*}})$ and then, the error at the end of having taken  $k = \frac{T}{s^*}$ SGD steps can bounded as, 
\[
    {\E}\tilde{L}(\mW^k) - \min_{\mW}\tilde{L}(\mW) \leq \epsilon + C(s^*,\tilde{L}) \norm{\rho_{\rm initial} - \mu_{s^*}}_{\mu_{s^*}^{-1}} e^{ - s^*\reg_{s^*} \cdot k} 
\]

\end{theorem}

Note that $\norm{\cdot}_{\mu_s^{-1}}$ denotes the following (Sec. 5, \cite{weijie_sde}), \[\norm{g}_{\mu_s^{-1}} = \left(\displaystyle\int_{\mathbb{R}^d} \abs{g(\x)}^2 d\mu_{s^{-1}}\right)^{1/2} = \left(Z_s\displaystyle\int_{\mathbb{R}^d} \abs{g(\x)}^2 \exp\left(\frac{2 \tilde{L}(\x)}{s}\right) d\x\right)^{1/2}\] where we have used that $\mu_s^{-1} = Z_s\exp\left(\frac{2 \tilde{L}(\x)}{s}\right)$ ; $Z_s$ is the normalization factor. Further, $g \in L^2(\mu_s^{-1})$ iff $\norm{g}_{\mu_s^{-1}} < \infty.$

Proof for the above theorem is given in Sec. \ref{sec:proof_mainthm}. We note that in the above setup by sampling initial weights in a special way can also have a convergence guarantee as follows,

\begin{corollary}[{\bf Global Convergence of SGD on Sigmoid and Tanh Neural Nets of $2$ Layers for Any Width and Data}]\label{thm:sgd-sig}

 We continue in the setup of Definitions \ref{def:sgd}, \ref{def:sigma} and Lemma \ref{def:lambdavillani}. For any $s > 0$, define the probability measure $\mu_{s} \coloneqq \frac{1}{Z_{s}} \exp\left({-\frac{2\tilde{L}(\mathbf{W})}{s}}\right)$, $Z_{s}$ being the normalization factor. Then, $\forall ~T > 0,$ and desired accuracy, $\epsilon > 0$, $\exists$ constants $A(\tilde{L})$, $B(T,\tilde{L})$ and $C(s,\tilde{L})$ s.t if the above SGD  is executed at a constant step-size $s = s^* \coloneqq \min\left(\frac{1}{{\rm gLip}(\tilde{L})}, \frac{\epsilon}{2 \cdot (A(\tilde{L}) + B(T,\tilde{L}))}\right)$ with the weights $\mathbf{W}^0$ initialized from a distribution with p.d.f $\rho_{\rm initial} \in L^2(\frac{1}{\mu_{s^*}})$ and $\norm{\rho_{\rm initial} - \mu_{s^*}}_{\mu_{s^*}^{-1}}  \leq \frac{\epsilon}{2 \cdot C(s^*,\tilde{L}) } \cdot e^{ \lambda_{s^*} \cdot T}$ -- then, in expectation, the regularized empirical risk of the net, $\tilde{L}$ would converge to its global infimum, with the rate of convergence given as, 
\[ {\E}\tilde{L}(\mathbf{W}^\frac{T}{s^*}) - \inf_{\mathbf{W}}\tilde{L}(\mathbf{W}) \leq \epsilon.\]
\end{corollary}

\textit{Proof.} Follows by using the stepsize and initialization distance assumption in Theorem \ref{lem:error_bound}.

We make a few quick remarks about the nature of the above guarantee,

{\it Firstly,} we note that the ``time horizon'' $T$ above is a free parameter - which in turn parameterizes the choice of the step-size and the initial weight distribution. Choosing a larger $T$ makes the constraints on the initial weight distribution weaker at the cost of making the step-size smaller and the required number of SGD steps larger. But for any value of $T$, the above theorem guarantees that SGD, initialized from weights sampled from a certain class of distributions, converges in expectation to the global minima of the regularized empirical loss for our nets for any data and width, in time ${\mathcal O}(\frac{1}{\epsilon})$ using a learning rate of ${\mathcal O}(\epsilon)$.

{\it Secondly,} we note that the phenomenon of a lower bound on the regularization parameter being needed for certain nice learning theoretic property to emerge has been seen in kernel settings too, \cite{aos_regularizer}. 

Also, to put into context the emergence of a critical value of the regularizer for nets as in the above theorem, we recall the standard result, that there exists an optimal value of the $\ell_2-$regularizer at which the excess risk of the similarly penalized linear regression becomes dimension free (Proposition 3.8, \cite{bach_ltfp}). However, we recall that the quantities required for computing this ``optimal'' regularizer are not knowable while training and hence it is not practically implementable. Thus, we see that for linear regression one can define a notion of an ``optimal" regularizer and it remains open to investigate if such a similar threshold of regularization also exists for nets. Our above theorem can be seen as a step in that direction.

{\it Thirdly,} we note that the lowerbounds on training time of neural nets proven in works like  \cite{klivans_superpoly}  do not apply here since these are proven for SQ algorithms and SGD is not of this type.

\textit{Finally}, note that the threshold value of regularization computed above, $\lambda_c$, does not explicitly depend on the training data or the neural architecture, consistent with observations in \cite{bartlett_book, chaos}. It depends on the activation and scales with the norm of the input data and the outer layer of weights. For intuition, suppose we scale the outer layer weights s.t we always have $\norm{\a}_2 \cdot B_x =1$. Then this leads to $\lambda_c = 2 \cdot  M_D L.$ For the sigmoid activation, $\sigma_{\beta}(x) = \frac{1}{1+e^{-\beta x}}.$ we have, $M_D = L = \frac{\beta}{4}$ and hence the $\lambda_c$ in this case (say $\lambda_{c, \beta}^{si}$) is $ = \frac{\beta^2}{8}.$ Since $\beta = 1$ is the most widely used setting for the above sigmoid activation, we use the same for our experimental analysis. This results in, 
\begin{equation}\label{lambsigvil} 
    \lambda_{c, 1}^{si} = 0.125
\end{equation} In experiments (Section \ref{sec:experiments}) we demonstrate that the degradation in performance due to the above regularization is not very significant.

\subsection{Global Convergence of Continuous Time SGD on Nets with SoftPlus Gates}

In \cite{weijie_sde} it was pointed out that in the relatively `easier' case if we only want too get a non-asymptotic convergence rate for the continuous time dynamics, the smoothness of the loss function is not needed and only the Villani condition suffices. In this short section we shall exploit this to show convergence of continuous time SGD on $\tilde{L}$ with the activation function being the unbounded `SoftPlus'. Also, in contrast to the guarantee about SGD in the previous subsection here we shall see that the SDE converges exponentially faster i.e \textit{at a  linear rate}. 

\begin{definition}[SoftPlus activation]
\quad For $\beta > 0,$ $x\in \R,$ define the SoftPlus activation function as \[{\rm SoftPlus}_{\beta}(x) = \frac{1}{\beta}\log_e{\left(1 + \exp(\beta x)\right)}\]
\end{definition}
\begin{remark}
Note that $\lim_{\beta \rightarrow \infty}{\rm SoftPlus}_{\beta}(x) = {\rm ReLU}(x).$ Also note that for $f(x) = $ SoftPlus$_\beta(x)$, $f'(x) = \sigma_{\beta}(x)$ (sigmoid function as defined above) and hence $\abs{f'(x)} \leq M_D$ for $M_D = 1$ and $f(x)$ is $L-$Lipschitz for $L = 1$. 
\end{remark}
\renewcommand{\W}{\mathbf{W}}

\begin{theorem}[{\bf Convergence To Global Minima of Continuous Time SGD on Depth$-2$ { SoftPlus} Nets}]\label{thm:softplus} \quad We consider SGD with step-size $s$ on a Frobenius norm regularized $\ell_2$-empirical loss on depth$-2$ neural nets as specified in Definition \ref{def:sgd}, while using $\sigma(x) = {\rm SoftPlus}_{\beta}(x)$ for $\beta > 0$. Then for $\mu_s, \rho$ and $\lambda_s$ as in Theorem \ref{thm:sgd-sig} and $s \in (0, S)$ for any $S>0,$ $\exists ~G(S, \tilde{L})$ that quantifies the excess risk at the stationary point of the SDE as, \[\tilde{L}(\W(\infty)) - \min_{\W}\tilde{L} \leq G(S, \tilde{L})\,s\] and $\exists ~C(s, \tilde{L})$, an increasing function of $s$, that satisfies
\[\abs{\E \tilde{L}(\W_s(t)) - \E \tilde{L}(\W_s(\infty))} \leq C(s, \tilde{L}) \norm{\rho - \mu_s}_{\mu_s^{-1}}\, e^{-\lambda_s t}.\]

Further, for any step size $0 < s \leq \min\left\{\frac{1}{2G(S, \tilde{L})}, S\right\}$,  for $\lambda > \lambda_{c} \coloneqq 2\,M_DLB_x^2\norm{\a}_2^2$ ($M_D$ and $L$ being defined as in the remark above) and for $t \geq \frac{1}{\lambda_s} \log{\left(\frac{2\, C(s, \tilde{L}) \norm{\rho - \mu_s}_{\mu_s^{-1}}}{\epsilon}\right)}$ we have that,
\[\E\, \tilde{L}(\W(t)) - \min_{\W}\tilde{L}(\W) \leq \epsilon.\]
\end{theorem}

\begin{proof}
\quad The SoftPlus function is Lipschitz and using the same analysis as in Appendix \ref{villani_condition}, we can claim that for $\lambda > \lambda_{c}$ the loss function in section \ref{def:sgd} with Softplus activations is a Villani function (and hence confining, by definition). The result can then be read off using Corollary 3.3 in \cite{weijie_sde}.\end{proof}

\subsection{\textbf{An Experimental Study of the Effect of Regularization at Various Widths}}

For further understanding of the scope of the known theory, in here we present some experimental studies on depth $2$ nets with sigmoid gates and using the normalizations that correspond to the theoretically needed threshold value of the regularizer being $0.125$ (equation \ref{lambsigvil}). We simulate SGD based training of multiple neural nets (details about the data and other hyperparameters are given below the respective plots), across a range of $\lambda$ and neural net widths $p.$ As evident in the plot (Figs. \ref{fig:multiwidth_a}, \ref{fig:multiwidth_r}), the test loss values in our regularization regime ($\lambda > 0.125$) are comparatively only mildly deteriorating, compared to regularizers upto four orders of magnitude lower. To the best of our knowledge, a similar provable convergence guarantee is not known for $\lambda < 0.125,$ and hence the slightly larger test loss that we incur is only a minor trade-off.

Code for these experiments can be viewed at this \href{https://github.com/pulkitgopalani/villani_sgd.git}{Github repository.}

\begin{figure}[!htb]
    \centering
    \includegraphics[width=0.7\textwidth]{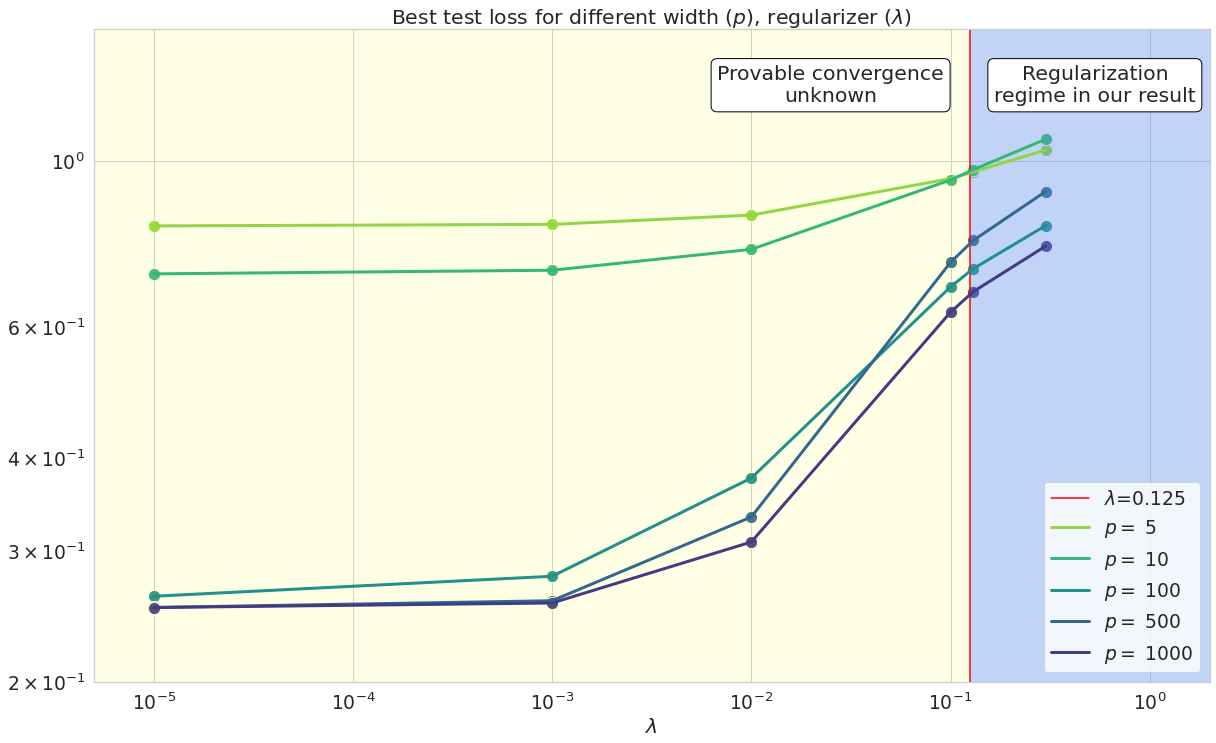}
    \caption{Best test loss across a range of $(\lambda,p = {\rm width})$ for $(\rvx,y)$ data labeled as,}
    \label{fig:multiwidth_a}
    \begin{center}
     $\rvx \sim {\rm Uniform} \left ( [0,1)^{20} \right ), y=\sin\left(\pi \frac{\norm{\rvx}_2^2}{20}\right) + \epsilon; \, \epsilon \sim \mathcal{N}(0, 0.25)$
\end{center}
\end{figure}

\begin{figure}[!htb]
    \centering
    \includegraphics[width=0.7\textwidth]{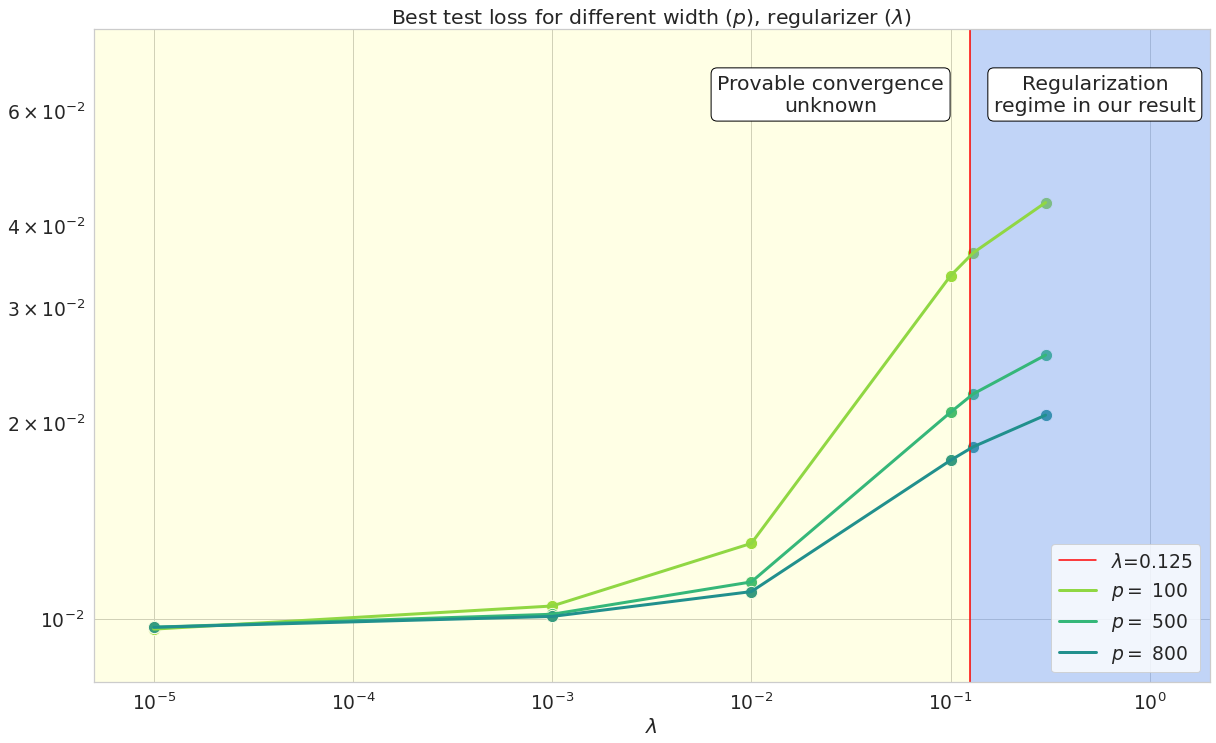}
    \caption{Best test loss across a range of $(\lambda,{\rm width} = p)$ for realizable data $(\rvx,y)$ sampled as,}
    \label{fig:multiwidth_r}
    \begin{center}
     $\rvx \sim {\rm Uniform} \left ( [0,1)^{20} \right ), y=\mathbf{a}^\top \sigma(\mathbf{W}\mathbf{\rvx}) + \epsilon;\, p=5, \epsilon \sim \mathcal{N}(0, 0.01), a_i \sim \frac{1}{\sqrt{p}} \mathcal{N}(0,1)$
\end{center}
\end{figure}

\section{Overview of \cite{weijie_sde}}\label{sec:weijierev}
\renewcommand{\W}{\mathbf{W}}
\newcommand{\bB}{\mathbf{B}}
 
In Section \ref{sec:proof_mainthm}, we give the proof for our main result (Theorem \ref{lem:error_bound}). As relevant background for the proof, we shall give in here a brief overview of the framework in  \cite{weijie_sde}, which can be summarized as follows : suppose one wants to minimize the function $\tilde{L}(\W) \coloneqq \frac{1}{n}\sum_{i=1}^n \tilde{L}_i(\W)$, where $i$ indexes the training data, $\W$ is the parameter space (the optimization space) of the loss function and $\tilde{L}_i$ is the loss evaluated on the $i^{th}-$datapoint. On this objective, a constant step-size mini-batch implementation of the Stochastic Gradient Descent (SGD) consists of doing the following iterates, $\W_{k+1} = \W_k - \frac{s}{b}\sum_{i} \nabla \tilde{L}_i(\W_k)$, where the sum is over a mini-batch (a randomly sampled subset of the training data) of size $b$ and $s$ is the fixed step-length. In, \cite{weijie_sde} the authors established that over any fixed time horizon $T >0$, as $s \rightarrow 0$, the dynamics of this SGD is arbitrarily well approximated (in expectation) by the Stochastic Differential Equation (SDE),
\begin{align}\label{def:sde}
\dd{\W_s(t)} = -\nabla \tilde{L}(\W_s(t)) \dd{t} + \sqrt{s}\dd{\bB(t)} \quad \quad \text{(SGD--SDE)}
\end{align} where $\bB(t)$ is the standard Brownian motion. We recall that the Markov semigroup operator $P_t$ for a stochastic process $X_t$ and its  infinitesimal generator $\mathcal{L}$  are given as, $P_t f (x) \coloneqq \E[f(X_t) \mid X_0 = x]$ and $\mathcal{L}f  \coloneqq \lim_{t\,  \downarrow\, 0}\,  \frac{P_t f - f}{t}$.

Thus for the SDE in Eq. \ref{def:sde}, $\mathcal{L}$ and its adjoint $\mathcal{L}^*$ are given as
\begin{align*}
    \mathcal{L}f = \frac{s}{2}\Delta f - \ip{\nabla f}{\nabla \tilde{L}} \implies
    \mathcal{L^*}f = \ip{\nabla {f}}{\nabla \tilde{L}} + f\,  \Delta\,\tilde{L} + \frac{s}{2}\Delta f
\end{align*} Then invoking the Forward Kolmogorov equation ${\partial_t f} = \mathcal{L}^* f$, one obtains the following Fokker--Planck--Smoluchowski PDE governing the evolution of the density of the SDE, 
\begin{align}\label{def:fps} \pdv{\rho_s}{t} &= \langle \nabla \rho_s, \nabla \tilde{L} \rangle + \rho_s \Delta \tilde{L} + \frac{s}{2} \Delta \rho_s  \quad \quad \text{(FPS)}
\end{align} 
Further, under appropriate conditions on $\tilde{L}$ the above implies that the density $\rho_s(t)$ converges exponentially fast to the Gibbs' measure corresponding to the objective function i.e the distribution with p.d.f \[\mu_s = \frac{1}{Z_s}{\exp\left(-\frac{2 \tilde{L}(\W)}{s}\right)}\] where $Z_s$ is the normalization factor. The sufficient conditions on $\tilde{L}$ that were shown to be needed to achieve this ``mixing" and to know a rate for it, are that of $\tilde{L}$ being a ``Villani Function'' as defined below, 

\begin{definition}[{\bf Villani Function} (\cite{villani2009hypocoercivity,weijie_sde})]\label{def:villani}
A map $f : \R^d \rightarrow \R$ is called a Villani function if it satisfies the following conditions,
\begin{enumerate}
\item  $f \in C^\infty$
\item  $\lim_{\norm{\x}\rightarrow \infty} f(\x) = +\infty$
\item  ${\displaystyle \int_{\R^d}} \exp\left({-\frac{2f(\x)}{s}}\right) \dd{\x} < \infty ~\forall s >0$
\item $\lim_{\norm{\x} \rightarrow \infty} \left ( -\Delta f(\x) + \frac{1}{s} \cdot \norm{\nabla f(\x)}^2 \right ) = +\infty ~\forall s >0$
\end{enumerate} Further, any $f$ that satisfies conditions 1 -- 3 is said to be ``confining''.
\end{definition}

 From Lemma $5.2$ \cite{weijie_sde}, the empirical or the population risk, $\tilde{L}$, being confining is sufficient for the FPS PDE (equation \ref{def:fps}) to evolve the density of SGD--SDE (equation \ref{def:sde}) to the said Gibbs' measure. 

But, to get non-asymptotic guarantees of convergence (Corollary $3.3$, \cite{weijie_sde}) -- even for the SDE, we need a Poincar\'e--type inequality to be satisfied (as defined below) by the aforementioned Gibbs' measure $\mu_s$. A sufficient condition for this Poincar\'e--type inequality to be satisfied is if a confining loss function $\tilde{L}$ also satisfied the last condition in definition \ref{def:villani} (and is consequently a Villani function).

\begin{theorem}[Poincar\'e--type Inequality (\cite{weijie_sde})]\label{def:lambdas}
Given a $f : \R^d \rightarrow \R$ which is a Villani Function (Definition \ref{def:villani}), for any given $s>0$, define a measure with the density, $\mu_s (\x) = \frac{1}{Z_s}{\exp\left(-\frac{2 f(\x)}{s}\right)}$, where $Z_s$ is a normalization factor. Then this (normalized) Gibbs' measure $\mu_s$ satisfies a Poincare-type inequality i.e $\exists ~\lambda_s >0$ (determined by $f$) s.t $\forall h \in C_c^{\infty}(\mathbb{R}^d)$ we have,
\[ {\rm Var}_{\mu_s}[h] \leq \frac{s}{2  \lambda_s} \cdot \E_{\mu_s} [ \norm{\nabla h}^2]\] 
\end{theorem}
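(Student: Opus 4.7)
The plan is to reduce this Poincar\'e-type inequality to a spectral-gap statement for a Schr\"odinger operator via a ground-state transformation, with the Villani condition (4) supplying exactly the confining potential needed for the resulting operator to have compact resolvent.

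First I would introduce $\phi(\x) := Z_s^{-1/2} \exp(-f(\x)/s)$, so that $\mu_s(\x) = \phi(\x)^2$ as a Lebesgue density. Given any $h \in C_c^\infty(\R^d)$, set $g := h\,\phi \in C_c^\infty(\R^d) \subset L^2(\R^d, d\x)$. Then $\E_{\mu_s}[h^2] = \norm{g}_{L^2(dx)}^2$ and $\E_{\mu_s}[h] = \langle g, \phi\rangle_{L^2(dx)}$. A direct calculation using $\nabla \phi = -(1/s)(\nabla f)\phi$ and integrating $\int \nabla(h^2) \cdot (\phi^2 \nabla f)\, d\x$ by parts (legitimate since $h$ has compact support) yields the key identity
\[
\E_{\mu_s}\bigl[\norm{\nabla h}^2\bigr] \;=\; \int_{\R^d} \norm{\nabla g}^2\, d\x + \int_{\R^d} W(\x)\, g(\x)^2\, d\x \;=\; \langle g,\, H g\rangle_{L^2(dx)},
\]
where $H := -\Delta + W$ is the Schr\"odinger operator with potential $W(\x) := (1/s^2)\norm{\nabla f(\x)}^2 - (1/s)\Delta f(\x)$. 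Observe that Villani condition (4) is equivalent to $s\, W(\x) \to +\infty$ as $\norm{\x} \to \infty$, so $W$ is a confining potential.

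Next I would verify that $\phi$ is a genuine ground state of $H$. A direct substitution using the formulas for $\nabla \phi$ and $\Delta \phi$ gives $H\phi = 0$ identically; Villani condition (3) guarantees $\phi \in L^2(\R^d, d\x)$, and $\phi$ is strictly positive on $\R^d$ by construction. Defining $H$ as the Friedrichs extension of the symmetric operator on $C_c^\infty(\R^d)$, it becomes self-adjoint and bounded below by $0$; since $W(\x) \to +\infty$ at infinity, a standard Persson-type argument (or a Rellich-type embedding of the form domain into $L^2$) shows that $H$ has compact resolvent. Consequently the spectrum is purely discrete, $0 = \Lambda_0 < \Lambda_1 \leq \Lambda_2 \leq \cdots$, accumulating only at $+\infty$; strict positivity of $\phi$ combined with a Perron-Frobenius / min-max argument upgrades $\Lambda_0 = 0$ to a simple eigenvalue and yields a genuine spectral gap $\Lambda_1 > 0$.

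Finally I would close the loop by invoking the variational inequality $\langle g, Hg\rangle \geq \Lambda_1 \norm{g}_{L^2(dx)}^2$ on the orthogonal complement of $\phi$ in $L^2(\R^d, d\x)$. Any $h \in C_c^\infty(\R^d)$ can be centered to $\tilde h := h - \E_{\mu_s}[h]$ without changing either side of the target inequality; for such centered $h$, the corresponding $g = \tilde h\,\phi$ satisfies $\langle g, \phi\rangle = \E_{\mu_s}[\tilde h] = 0$ and $\norm{g}_{L^2(dx)}^2 = \mathrm{Var}_{\mu_s}(h)$. Combined with the identity from paragraph one this gives $\mathrm{Var}_{\mu_s}(h) \leq \Lambda_1^{-1}\, \E_{\mu_s}[\norm{\nabla h}^2]$; setting $\lambda_s := s\Lambda_1/2$ recovers exactly the claimed inequality. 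The main obstacle is the functional-analytic step of passing from the pointwise growth condition on $sW = -\Delta f + (1/s)\norm{\nabla f}^2$ to compactness of the resolvent of $H$ -- the potential $W$ need not be sign-definite or even bounded below on compact sets (Villani (4) only controls its behavior at infinity), so one has to be careful in choosing the self-adjoint realization of $H$ and in invoking the appropriate embedding of its quadratic form domain. Once that compactness is in hand, simplicity of the zero eigenvalue and the existence of a strictly positive spectral gap follow from the explicit knowledge of the strictly positive ground state $\phi$.
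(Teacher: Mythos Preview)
Your argument is correct: the ground--state transform $g = h\phi$ with $\phi = \sqrt{\mu_s}$ converts the weighted Dirichlet form into the quadratic form of the Schr\"odinger operator $H = -\Delta + W$, $W = s^{-2}\norm{\nabla f}^2 - s^{-1}\Delta f$, and Villani condition (4) is precisely $W(\x)\to+\infty$, which (together with continuity of $W$, hence boundedness below) gives compact resolvent, discrete spectrum, and a spectral gap above the positive ground state $\phi$. One small remark: since $f\in C^\infty$ and $W\to+\infty$ at infinity, $W$ is automatically bounded below globally, so the ``obstacle'' you flag about sign--indefiniteness is milder than you suggest.

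The paper does not prove this theorem itself; it is quoted from \cite{weijie_sde}, and in Appendix~\ref{sec:constants} the paper only \emph{recalls} the construction of $\lambda_s$ from that reference. That construction follows a different route from yours: rather than a spectral--gap argument, it truncates $\mu_s$ to a large ball $B_{R_s}$, invokes the classical Poincar\'e inequality on bounded domains (Evans), and uses the growth of $V_s = s^{-1}\norm{\nabla f}^2 - \Delta f$ outside $B_{R_s}$ to patch the tail, arriving at the explicit formula
\[
\lambda_s = \frac{1 + 3s\bigl(\inf V_s\bigr)\,\epsilon(R_s)}{2\bigl(C(R_s) + 3\epsilon(R_s)\bigr)}.
\]
Your Schr\"odinger approach is cleaner and identifies $\lambda_s$ intrinsically as $\tfrac{s}{2}\Lambda_1(H)$, but it does not by itself produce a usable numerical bound on $\lambda_s$; the truncation argument the paper cites is less elegant but yields the explicit constant that feeds into the quantitative SGD convergence rates downstream (Theorem~\ref{thm:sgd-sig} and Appendix~\ref{sec:constants}).
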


The approach of \cite{weijie_sde} has certain key interesting differences from many other contemporary uses of SDEs to prove the convergence of discrete time stochastic algorithms. Instead of focusing on the convergence of parameter iterates $\mathbf{W}^k$, they instead look at the dynamics of the expected error i.e $\E [ \tilde{L}(\mathbf{W}^k)]$, for  $\tilde{L}$ the empirical or the population risk. This leads to a transparent argument for the convergence of $\E [ \tilde{L}(\mathbf{W}^k)]$ to $\inf_{\mathbf{W}}  \tilde{L}(\mathbf{W})$, by leveraging standard results which help one pass from convergence guarantees on the SDE to a convergence of the SGD. 
 
We note that \cite{weijie_sde} achieve this conversion of guarantees from SDE to SGD by additionally assuming gradient smoothness of $\tilde{L}$ -- and we would show that this assumption holds for the natural neural net loss functions that we consider.

\section{Proof of Theorem \ref{lem:error_bound}}\label{sec:proof_mainthm} 
\begin{proof}
\quad Note that $\tilde{L}$ being a confining function can be easily read off from Definition \ref{def:villani}. Property (2) in Definition \ref{def:villani} is straightforward to verify, since  $\tilde{L}_i \geq 0 \,\, \forall \, i$ by definition, and the Frobenius--norm regularizer ensures that the total loss is unbounded above.

To verify Property (3), consider the following (noting that $\tilde{L}_i \geq 0$ by definition)
\begin{align*}
    \int \exp\left(-\frac{2 \tilde{L}(\W)}{s}\right) \, d\W &\leq \int \exp\left(\frac{-2}{s} \cdot \frac{\lambda}{2} \norm{\W}_F^2\right) \, d\W
    = \int \exp\left(\frac{-\lambda}{s} \norm{\W}_F^2\right) \, d\W
\end{align*} 
It is clear from above that the integral is  $O\left(\poly(\frac{s}{\lambda})\right)$.

Further, as shown in Appendix \ref{villani_condition}, the following inequalities hold,
\begin{align}
    \nonumber  \norm{\nabla_{\mathbf{W}}\tilde{L}}_2^2 &\geq (\lambda^2-2\lambda  M_D L B_x^2 \norm{\a}_2^2) \norm{\mathbf{W}}_F^2 \\&- {2\lambda m M_D B_x \norm{\a}_2 (B_y + \norm{\a}_2\norm{\c}_2) \norm{\mathbf{W}}_F}\\
    \Delta_{\mathbf{W}\mathbf{W}}\tilde{L} &\leq p\left[M_d^2 B_x^2 \norm{\a}_2^2 + {\norm{\a}_2}\left[ \left(B_y + \norm{\a}_2\left(\norm{\c}_2 + LB_x\norm{\mathbf{W}}_F\right)\right)\left(M_D'B_x^2\right)\right] + \lambda d \right]
\end{align}

Combining the above two inequalities we can conclude that, $\exists$ functions $g_1, g_2, g_3$ such that,

\begin{align*}
\frac{1}{s} \norm{\nabla_{\mathbf{W}}\tilde{L}}^2 - \Delta_{\mathbf{W}\mathbf{W}
}\tilde{L} &\geq g_1(\lambda, s) \norm{\mathbf{W}}_F^2 - g_2(\lambda, s) \norm{\mathbf{W}}_F + g_3(\lambda, s)
\end{align*} 

where in particular,
\[g_1(\lambda, s) = \lambda^2-2\lambda \cdot  M_D L B_x^2 \norm{\a}_2^2.\]

Hence we can conclude that  for $\lambda > \lambda_{c} \coloneqq 2 M_D L B_x^2 \norm{\a}_2^2, \forall s > 0,$ $\frac{1}{s} \norm{\nabla_{\mathbf{W}}\tilde{L}}^2 - \Delta_{\mathbf{W}\mathbf{W}
}\tilde{L}$ diverges as $\norm{\mathbf{W}} \rightarrow +\infty$, since $g_1(\lambda, s) > 0.$ The key aspect of the above analysis being that the bound on $\Delta_{\mathbf{W}\mathbf{W}}$ does not 
 depend on $\norm{\mathbf{W}}_F^2.$

Thus we have, that the following limit holds,
\begin{align*}
    \lim_{\norm{\mathbf{W}}_F \rightarrow +\infty} \left(\frac{1}{s} \norm{\nabla_{\mathbf{W}}\tilde{L}}^2 - \Delta_{\mathbf{W}\mathbf{W}
}\tilde{L}\right) = +\infty
\end{align*} for the range of $\lambda$ as given in the theorem, hence proving that $\tilde{L}$ is a Villani function. 

Towards getting an estimate of the step-length as given in the theorem statement, we also show in Appendix \ref{smoothness}  that the loss function $\tilde{L}$ is gradient--Lipschitz with the smoothness coefficient being upperbounded as, 
\[{\rm gLip}(\tilde{L}) \leq  \sqrt{p}\left({\norm{\a}_2 B_x} B_y L_{\sigma}' +  \sqrt{p}\norm{\a}_2^2 M_D^2 B_x^2 + {{p} \norm{\a}_2^2 B_x^2} M_D' B_{\sigma} + \lambda\right).\]

Now we can invoke Theorem $3$ (Part 1), \cite{weijie_sde} to write the error of running SGD at a constant step-size $0 < s \leq \frac{1}{{\rm gLip}(\tilde{L})}$ for $\frac{T}{s}$ iterations as, 

\[ {\E}\tilde{L}(\mathbf{W}^\frac{T}{s}) - \min_{\mathbf{W}}\tilde{L}(\mathbf{W}) \leq (A(\tilde{L}) + B(T,\tilde{L})) \cdot s + C(s,\tilde{L}) \norm{\rho - \mu_s}_{\mu_s^{-1}} e^{ - \lambda_s \cdot T} \]

Hence to recover the guarantee in Theorem \ref{lem:error_bound}, it suffices to choose the step-size as, \[s = s^*(\epsilon,T) \coloneqq \min\left(\frac{1}{{\rm gLip}(\tilde{L})}, \frac{\epsilon}{(A(\tilde{L}) + B(T,\tilde{L}))} \right)\]

We recall that in above the time horizon $T$ is a free parameter that determines the step-size $s^*$. For details about the constants $A, B ~\& C,$ the reader is referred to Appendix \ref{sec:constants}.
\end{proof}

\section{Ablation Study with Noisy Labels}\label{sec:experiments}

In this section, we give some further experimental studies of doing regression over nets that are within the ambit of our core result of Theorem \ref{lem:error_bound}. The demonstrations in this section are designed to address two conceptual issues,  

\begin{enumerate}
    \item \,\, Firstly, we verify that when using sigmoid gates and $\lambda$ slightly larger than the theoretically computed threshold of, $\lambda_{c,1}^{si} = 0.125$ (equation \ref{lambsigvil}) it does not lead to the regularizer term in the loss overpowering the actual (empirical risk) objective. We demonstrate this by adding random noise (details follow) to the training labels and observing that the training / test loss values reached by the SGD degrades in response to increasing the fraction of noisy labels even at $\lambda = 0.13 > \lambda_{c,1}^{si}$. 
    \item \, Secondly, we verify that even when the neural net is not initialized from $\rho_{initial}$ as described in Corollary \ref{thm:sgd-sig}, the SGD converges and is affected by $\lambda$ tuning as would be expected from the theorem.
\end{enumerate}
We train the neural net on 3 different synthetic training datasets: a) the clean version (generated as $\x \sim {U}[0, 1)^d, \, y = \sin{\left(\pi \frac{\norm{\x}_2^2}{d}\right)}$ and when b) $50\%$ and c) $90\%$ of the labels in the clean data have been additively corrupted by $0.05\cdot \xi, \,  \xi \sim \text{Cauchy}(0, 1).$ 
 

Recalling the theoretically computed threshold of, $\lambda_{c,1}^{si} = 0.125$ (equation \ref{lambsigvil}) we choose $\lambda  = 0.013, 0.13$ as two values above and below the threshold. We set $\norm{\a}_2 = \frac{1}{B_x}$, $d=20$ (the data dimension). Further, we choose data such that $B_x = \sqrt{d}$, and choose 2 values for the number of gates $p$ -- one which is half the data-dimension ($10$) and one which is more than double of it ($50$). Finally, the values of $p$, step-size $\eta$, and $\lambda$ are specified for each of the plots (in caption) in Figs. \ref{fig:013_10}, \ref{fig:13_10}, \ref{fig:013_50}, \ref{fig:13_50}.




{\it In all experiments we see that the test loss on clean data (the middle figures in the panels below) progressively degrades with increasing the fraction of noisy labels in the training data -- thus confirming that using $\lambda = 0.13 > \lambda_{c,1}^{si}$ regularization did not obfuscate the algorithm's response to meaningful details of the unregularized loss.}

Code for these experiments can be found at this \href{https://github.com/pulkitgopalani/villani_sgd.git}{Github repository}.

\begin{figure}[htb]
    \centering
    \includegraphics[width=\textwidth]{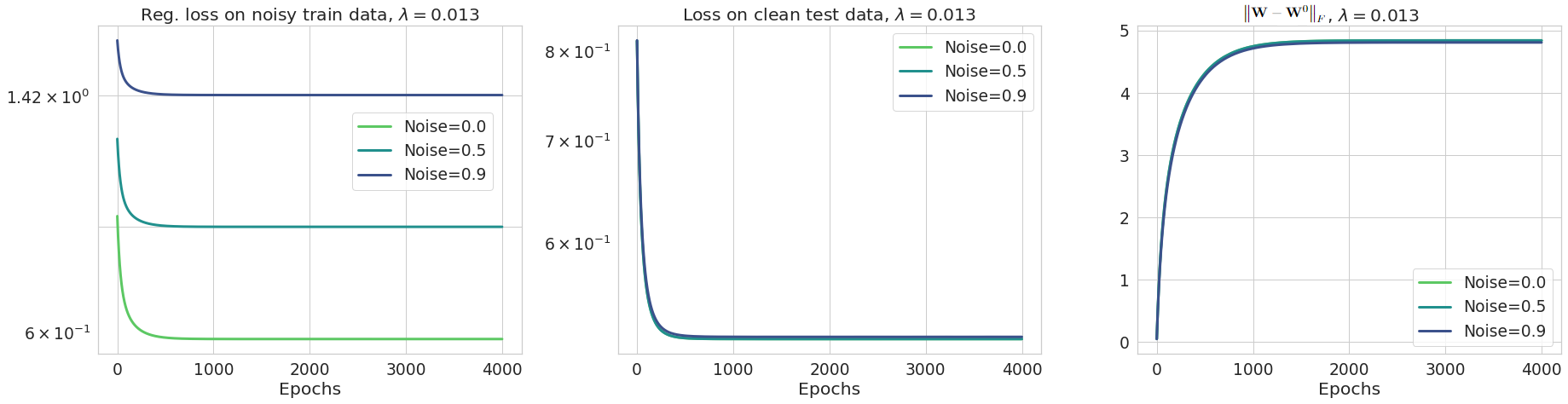}
    \caption{Regression, $(p, \eta, \lambda) = (10, 1\text{e-}2, 0.013)$}
    \label{fig:013_10}
\end{figure}

\begin{figure}[htb]
    \centering
    \includegraphics[width=\textwidth]{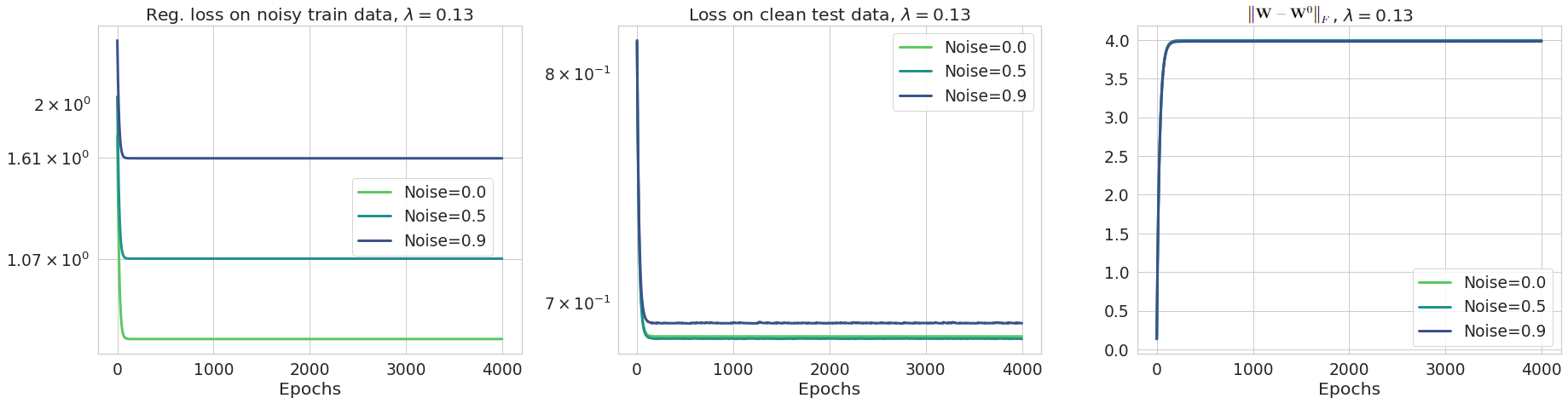}
    \caption{Regression, $(p, \eta, \lambda) = (10, 1\text{e-}2, 0.13)$}
    \label{fig:13_10}
\end{figure}

\begin{figure}[htb]
    \centering
    \includegraphics[width=\textwidth]{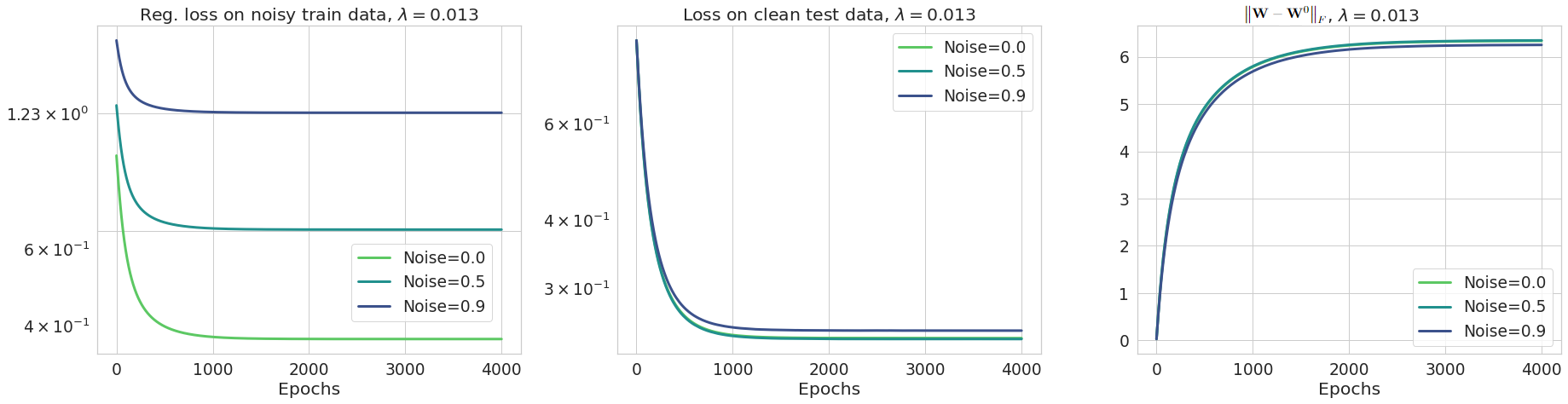}
    \caption{Regression, $(p, \eta, \lambda) = (50, 5\text{e-}3, 0.013)$}
    \label{fig:013_50}
\end{figure}

\begin{figure}[htb]
    \centering
    \includegraphics[width=\textwidth]{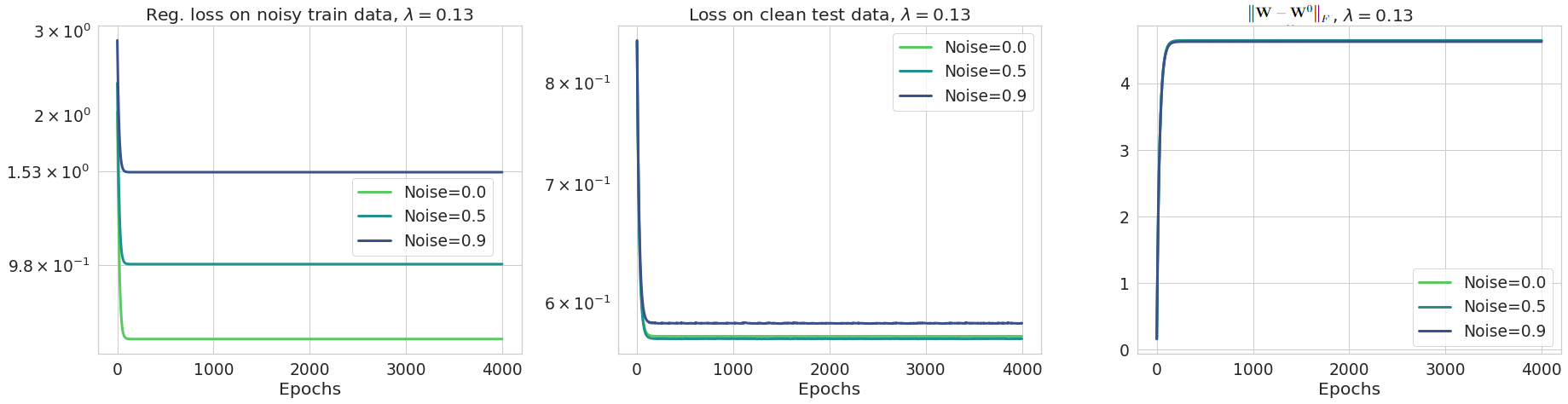}
    \caption{Regression, $(p, \eta, \lambda) = (50, 1\text{e-}2, 0.13)$}
    \label{fig:13_50}
\end{figure}

\section{Conclusion}\label{sec:conc}

Convergence of discrete time algorithms like SGD to their continuous time counterpart (SDEs) has lately been an active field of research. Availability of a well--developed mathematical theory for SDEs holds potential for this mapping to makes theoretical analyses of stochastic gradient--based algorithms possible in hitherto unexplored regimes \textit{if} the discrete time algorithm has a corresponding SDE to which its proximity is quantifiable.

In a recent notable progress in \cite{sanjeev_svag}, an iterative algorithm called SVAG (Stochastic Variance Amplified Gradient) was introduced as a quantifiably good discretization of another SGD motivated SDE than what we use here. They show that SVAG iterates converge in expectation to the covariance--scaled It\^{o} SDE. And they gave empirical evidence that this discretization and their SDE are also tracking SGD on real nets. It remains to be explored in future if this can become a pathway towards better guarantees on neural training than what we get here.  

Additionally, we note that since SoftPlus is not bounded, using our current technique it does not follow that the SGD algorithm also converges to the global minimizer of its Frobenius norm regularized loss. Investigating the possibility of this result could be an exciting direction of future work. In general we believe that trying to reproduce our Theorem \ref{lem:error_bound} using a direct analysis of the dynamics of SGD could be a fruitful venture leading to interesting insights. Lastly, our result motivates a new direction of pursuit in deep-learning theory, centered around understanding the nature of the Poincar\'e constant of Gibbs' measures induced by neural nets.

\subsection*{Acknowledgments}
We would like to thank Hadi Daneshmand and Zhanxing Zhu for their critical suggestions with setting up the experiments. Our work was hugely helped by Hadi sharing with us some of his existing code probing similar phenomenon as explored in our Section \ref{sec:experiments}. We thank Matthew Colbrook and Siva Theja Maguluri for extensive discussions throughout this project. We are also grateful to Weijie Su, Siddhartha Mishra, Avishek Ghosh, Theodore Papamarkou, Alireza and Purushottam Kar for insightful comments at various stages of preparing this draft. During parts of doing this work Anirbit Mukherjee was also helped by the Royal Society Research Grant RGS R1 231332.

\subsection*{Data Availability Statement}
The data used for experiments in this paper can be generated using the accompanying code provided earlier.

\bibliography{References.bib}

\appendix

\renewcommand{\theHsection}{A\arabic{section}}

\section{Towards Establishing the Villani condition for the Empirical Loss on Nets}

\label{villani_condition} In the following, $\norm{\mathbf{W}}$ for a matrix $\mathbf{W}$ denotes its spectral or operator norm. We recall that the  regularized $\ell_2-$loss for training the given neural net on data $\mathcal{D} = \{(\x_i, y_i)\}_{i=1}^n$ is,
\[\tilde{L}(\mathbf{W}) \coloneqq  \frac{1}{n}\sum_{i=1}^n \tilde{L}_i(\mathbf{W}) + R_{\lambda}({\mathbf{W}}) \coloneqq \frac{1}{n}\sum_{i=1}^n \left \{ \frac{1}{2} \left(y_i - f(\x_i; \a,\mathbf{W}) \right)^2 \right \} + \frac{\lambda}{2} \norm{\mathbf{W}}_F^2 \]
Explicitly stated, the SGD iterates with step-length $s >0$ that we analyze on the above loss are, 
\begin{align*}
\mathbf{W}^{k+1} &= \mathbf{W}^k - s \left[\frac{1}{b}\sum_{i \in \mathcal{B}_k} \nabla \tilde{L}_i(\mathbf{W}^k) + \nabla R_{\lambda}({\mathbf{W}^k})\right]\\
&= \mathbf{W}^k - \frac{s}{b}\sum_{i \in \mathcal{B}_k}  \Big ( - \left(y_i - f(\x_i; \a,\mathbf{W}^k) \right) \cdot \nabla_{\mathbf{W}^k}    f(\x_i; \a,\mathbf{W}^k) \Big ) -  s\lambda \mathbf{W}^k \\
&= (1 - s \lambda) \mathbf{W}^k + \frac{s}{b} \sum_{i \in \mathcal{B}_k} \left [ (y_i - f(\x_i; \a,\mathbf{W}^k)) \cdot \nabla_{\mathbf{W}^k}  f(\x_i; \a,\mathbf{W}^k)  \right ]
\end{align*}

We consider gradients and Laplacians of $\tilde{L}(\mathbf{W})$ w.r.t. $\mathbf{W}_j, j=1,2,\ldots,p$  separately, 

\begin{align*}
    \nabla_{\mathbf{W}_j}\tilde{L} &=  \left[\frac{1}{n}\sum_{i=1}^n \left(f(\x_i;\, \a, \mathbf{W}) - y_i\right)\nabla_{\mathbf{W}_j} f(\x_i;\, \a, \mathbf{W}) \right] + \lambda \mathbf{W}_j\\
    &=  \left[\frac{1}{n}\sum_{i=1}^n a_j \left(f(\x_i;\, \a, \mathbf{W}) - y_i\right)  \sigma'(\mathbf{W}_j^\top \x_i) \x_i \right] + \lambda \mathbf{W}_j \quad ; \quad j=1,2,\ldots,p \\
    \implies \norm{\nabla_{\mathbf{W}_j}\tilde{L}}_2^2 &= \norm{\frac{1}{n}\sum_{i=1}^n a_j \left(f(\x_i;\, \a, \mathbf{W}) - y_i\right)  \sigma'(\mathbf{W}_j^\top \x_i) \x_i}_2^2 + \lambda^2 \norm{\mathbf{W}_j}_2^2 + \\&  \quad \quad \quad \quad\quad \quad \quad \quad 2\ip{\left[\frac{1}{n}\sum_{i=1}^n a_j \left(f(\x_i;\, \a, \mathbf{W}) - y_i\right)  \sigma'(\mathbf{W}_j^\top \x_i) \x_i\right]}{\lambda \mathbf{W}_j}\\
    &\geq \lambda^2 \norm{\mathbf{W}_j}_2^2 - \frac{2\lambda}{n}\ip{\left[\sum_{i=1}^n a_j \left(y_i - f(\x_i;\, \a, \mathbf{W})\right)  \sigma'(\mathbf{W}_j^\top \x_i) \x_i\right]}{\mathbf{W}_j} \\
    &\geq \lambda^2 \norm{\mathbf{W}_j}_2^2 - \frac{2\lambda\norm{\mathbf{W}_j}_2}{n}\left[\sum_{i=1}^n  \abs{a_j} \cdot \left(\abs{y_i} + \norm{\a}_2\norm{\sigma(\mathbf{W}\x_i)}_2\right) \cdot \abs{\sigma'(\mathbf{W}_j^\top \x_i) } \cdot  \norm{\x_i}_2\right]
\end{align*} Where in the last inequality we have used Cauchy-Schwarz inequality and triangle inequality wherever applicable. We have,
\begin{align*}
    \norm{\nabla_{\mathbf{W}_j}\tilde{L}}_2^2 &\geq \lambda^2 \norm{\mathbf{W}_j}_2^2 - \norm{\mathbf{W}_j}_2\abs{a_j} \cdot \Big (  2\lambda M_D B_x  \left \{ B_y + \norm{\a}_2 \cdot (LB_x\norm{\mathbf{W}} + \norm{\c}_2) \right \} \Big )
\end{align*}

In above we sum over $j=1,2,\ldots,p$ on both sides. Via Cauchy-Schwartz inequality we upperbound as, $\sum_{j=1}^p \norm{\mathbf{W}_j}_2 \cdot \abs{a_j} \leq \norm{\mathbf{W}}_F \cdot \norm{\a}$ to get,

\begin{align}\label{eg:grad-norm}
    \nonumber \norm{\nabla_{\mathbf{W}}\tilde{L}}^2 &\geq \lambda^2 \norm{\mathbf{W}}_F^2 - {\norm{\a}_2  \norm{\mathbf{W}}_F} \cdot 2\lambda M_D B_x \cdot \left[B_y + (LB_x\norm{\mathbf{W}} + \norm{\c}_2)\norm{\a}_2\right]\\
    \nonumber &\geq \lambda^2 \cdot \norm{\mathbf{W}}_F^2  -2\lambda M_D L B_x^2 \norm{\a}_2^2 \cdot \norm{\mathbf{W}}_F \norm{\mathbf{W}} - {2\lambda  M_D B_x \norm{\a}_2 (B_y + \norm{\c}_2 \cdot \norm{\a}_2 ) \norm{\mathbf{W}}_F}\\
    &\geq \lambda^2 \cdot \norm{\mathbf{W}}_F^2  -2\lambda M_D L B_x^2 \norm{\a}_2^2 \cdot \norm{\mathbf{W}}_F^2 - {2\lambda  M_D B_x \norm{\a}_2 (B_y + \norm{\c}_2 \cdot \norm{\a}_2 ) \norm{\mathbf{W}}_F}
\end{align}

In the last line above we used that, $\norm{\mathbf{W}} \leq \norm{\mathbf{W}}_F$.

Now, for $j=1,2,\ldots,p$ we have for the second derivatives, 
\begin{align*}
    \Delta_{\mathbf{W}_j\mathbf{W}_j}\tilde{L} = \nabla_{\mathbf{W}_j} \cdot \nabla_{\mathbf{W}_j}\tilde{L} &= \nabla_{\mathbf{W}_j} \cdot \left[\left\{\frac{1}{n}\sum_{i=1}^n \left(f(\x_i;\, \a, \mathbf{W}) - y_i)\right)\nabla_{\mathbf{W}_j} f(\x_i;\, \va, \mathbf{W})\right\} + \lambda \mathbf{W}_j\right] \\ 
    &= \left[\frac{1}{n}\sum_{i=1}^n \nabla_{\mathbf{W}_j} \cdot\left\{\left(f(\x_i;\, \va, \mathbf{W}) - y_i)\right)\nabla_{\mathbf{W}_j} f(\x_i;\, \va, \mathbf{W})\right\} + \lambda d\right]\\
    &= \left[\frac{1}{n}\sum_{i=1}^n \left\{\norm{\nabla_{\mathbf{W}_j} f(\x_i;\, \va, \mathbf{W})}_2^2 + \left(f(\x_i;\, \va, \mathbf{W}) - y_i)\right)\Delta_{\mathbf{W}_j} f(\x_i;\, \va, \mathbf{W})\right\} + \lambda d\right]\\
    &= \left[\frac{1}{n}\sum_{i=1}^n \left\{\norm{a_j \sigma'(\mathbf{W}_j^\top \x_i) \x_i}_2^2 + \left(f(\x_i;\, \va, \mathbf{W}) - y_i)\right)\left(a_j\sigma''(\mathbf{W}_j^\top\x_i)\norm{\x_i}_2^2\right)\right\} + \lambda d\right]\\
    &\leq \left[M_d^2 B_x^2 \norm{\va}_2^2 + \frac{a_j}{n}\sum_{i=1}^n \left\{ \left(f(\x_i;\, \va, \mathbf{W}) - y_i)\right)\left(\sigma''(\mathbf{W}_j^\top\x_i)\norm{\x_i}_2^2\right)\right\} + \lambda d\right]\\
    &\leq \left[M_d^2 B_x^2 \norm{\va}_2^2 + {\norm{\va  }_2}\left[ \left(B_y + \norm{\va}_2\left(\norm{\c}_2 + LB_x\norm{\mathbf{W}}\right)\right)\left(M_D'B_x^2\right)\right] + \lambda d\right]\\
    \implies \Delta_{\mathbf{W}\mathbf{W}}\tilde{L} = \sum_j \Delta_{\mathbf{W}_j\mathbf{W}_j}\tilde{L} &\leq p\left[M_d^2 B_x^2 \norm{\va}_2^2 + {\norm{\va}_2}\left[ \left(B_y + \norm{\va}_2\left(\norm{\c}_2 + LB_x\norm{\mathbf{W}}\right)\right)\left(M_D'B_x^2\right)\right] + \lambda d\right]
\end{align*}

Invoking $\norm{\mathbf{W}} \leq \norm{\mathbf{W}}_F$ in the last line above we get the expression as required in the proof in Section \ref{sec:proof_mainthm}.

\section{Bounding the Gradient Lipschitzness Coefficient of the Empirical Risk of the Neural Nets}

\label{smoothness} We start with noting the following equality,
\begin{align*}
    \g_j(\mathbf{W}) \coloneqq \nabla_{\mathbf{W}_j}\tilde{L} &=  \left[\frac{1}{n}\sum_{i=1}^n \left(f(\x_i;\, \va, \mathbf{W}) - y_i\right)\nabla_{\mathbf{W}_j} f(\x_i;\, \va, \mathbf{W}) \right] + \lambda \mathbf{W}_j
\end{align*} We first determine a bound on the Lipschitz constant of $\g_j$ for $j=1,2,\ldots,p,$.

For any two possible weight matrices $\mathbf{W}_1$ and $\mathbf{W}_2$ we have, 

\begin{align*}
    &\norm{\g_j(\mathbf{W}_2) - \g_j(\mathbf{W}_1)}_2\\
    &= \norm{\left[\frac{1}{n}\sum_{i=1}^n \left(f(\x_i;\, \va, \mathbf{W}_2) - y_i\right)\nabla_{\mathbf{W}_{2,j}} f(\x_i;\, \va, \mathbf{W_2}) - \left(f(\x_i;\, \va, \mathbf{W}_1) - y_i\right)\nabla_{\mathbf{W}_{1,j}} f(\x_i;\, \va, \mathbf{W}_1) \right] + \lambda (\mathbf{W}_{2,j} - \mathbf{W}_{1,j})}_2\\
    &\leq \bigg[\frac{1}{n}\sum_{i=1}^n \norm{\left(f(\x_i;\, \va, \mathbf{W}_2) - y_i\right)\nabla_{\mathbf{W}_{2,j}} f(\x_i;\, \va, \mathbf{W_2}) - \left(f(\x_i;\, \va, \mathbf{W}_1) - y_i\right)\nabla_{\mathbf{W}_{1,j}} f(\x_i;\, \va, \mathbf{W}_1)}_2 \bigg] + \lambda\norm{ \mathbf{W}_{2,j} - \mathbf{W}_{1,j}}_2\\
    &\leq \bigg[\frac{\norm{\va}_2 B_x}{n}\sum_{i=1}^n \abs{\left(f(\x_i;\, \va, \mathbf{W}_2) - y_i\right)\sigma'(\mathbf{W}_{2,j}^\top\x_i) - \left(f(\x_i;\, \va, \mathbf{W}_1) - y_i\right)\sigma'(\mathbf{W}_{1,j}^\top\x_i)} \bigg] + \lambda\norm{ \mathbf{W}_{2} - \mathbf{W}_{1}}_F
\end{align*} 

Hence the problem reduces to determining the Lipschitz constant of $F(\mathbf{W}) \coloneqq \left(f(\x_i;\, \va, \mathbf{W}) - y_i\right)\sigma'(\mathbf{W}_{j}^\top\x_i).$ We can split it as $F(\mathbf{W}) = f(\x_i;\, \va, \mathbf{W})\sigma'(\mathbf{W}_{j}^\top\x_i) +  (-y_i \sigma'(\mathbf{W}_{j}^\top\x_i)).$

We first show that $f(\x_i;\, \va, \mathbf{W})\sigma'(\mathbf{W}_{j}^\top\x_i)$ is Lipschitz. Towards that consider the gradients \[ \h_k(\mathbf{W}) \coloneqq \nabla_{\mathbf{W}_k}\left[f(\x_i;\, \va, \mathbf{W})\sigma'(\mathbf{W}_{j}^\top\x_i)\right], \,\, k=1,2,\ldots,p\]
Now,
\begin{align*}
    \norm{\nabla_{\mathbf{W}_k}\left[f(\x_i;\, \va, \mathbf{W})\sigma'(\mathbf{W}_{j}^\top\x_i)\right]}_2 &= \norm{a_k \sigma'(\mathbf{W}_k^\top \x_i)\sigma'(\mathbf{W}_j^\top \x_i)\x_i + \mathbf{1}_{k=j}\,f(\x_i;\, \va, \mathbf{W}) \sigma''(\mathbf{W}_j^\top\x_i)\x_i}_2\\
    &\leq \norm{\va}_2 M_D^2 B_x + \norm{\va}_2 \sqrt{p} B_{\sigma} M_D' B_x = L_{\rm prod}
\end{align*}
Where in the second term the $\sqrt{p}$ factor comes in from using Cauchy-Schwarz inequality.

We concatenate these functions along the indices $k=1,2,\ldots,p$, to get
\[\h(\mathbf{W}) \coloneqq \nabla_{\mathbf{W}}\left[f(\x_i;\, \a, \mathbf{W})\sigma'(\mathbf{W}_{j}^\top\x_i)\right] = \left[\h_1(\mathbf{W}), \h_2(\mathbf{W}),\ldots,\h_p(\mathbf{W}) \right]\]
Thus we have,
\begin{align*}
    \norm{\h(\mathbf{W}_2) - \h(\mathbf{W}_1)}_2 &= \norm{\left[\h_1(\mathbf{W}_2),\h_2(\mathbf{W}_2),\ldots,\h_p(\mathbf{W}_2)\right] - \left[\h_1(\mathbf{W}_1),\h_2(\mathbf{W}_1),\ldots,\h_p(\mathbf{W}_1)\right]}_2\\
    &= \norm{\left[\h_1(\mathbf{W}_2) - \h_1(\mathbf{W}_1),\h_2(\mathbf{W}_2) - \h_2(\mathbf{W}_1),\ldots,\h_p(\mathbf{W}_2) - \h_p(\mathbf{W}_1)\right]}_2\\
    &= \bigg|\bigg| \Big[\norm{\h_1(\mathbf{W}_2) - \h_1(\mathbf{W}_1)}_2,\norm{\h_2(\mathbf{W}_2) - \h_2(\mathbf{W}_1)}_2,\ldots,\norm{\h_p(\mathbf{W}_2) - \h_p(\mathbf{W}_1)}_2\Big] \bigg|\bigg|_2\\
    &\leq \norm{\left[L_{\rm prod}\norm{\mathbf{W}_2-\mathbf{W}_1}_F, L_{\rm prod}\norm{\mathbf{W}_2-\mathbf{W}_1}_F,\ldots,L_{\rm prod}\norm{\mathbf{W}_2-\mathbf{W}_1}_F\right]}_2\\
    &= \sqrt{p}L_{\rm prod}\norm{\mathbf{W}_2-\mathbf{W}_1}_F
\end{align*}
Thus, the Lipschitz constant of $f(\x_i;\, \a, \mathbf{W})\sigma'(\mathbf{W}_{j}^\top\x_i)$ is $\sqrt{p}\left(\norm{\a}_2 M_D^2 B_x + \norm{\a}_2 \sqrt{p}B_{\sigma}  M_D' B_x \right).$

The Lipschitz constant of $y_i \sigma'(\mathbf{W}_j^\top\x_i)$ is simply $\abs{y_i} L_{\sigma}'.$ Combining these using the fact that the Lipschitz constant of $f(x)+g(x)$ for functions $f(x)$ ($L_1-$Lipschitz) and $g(x)$ ($L_2-$Lipschitz) is $L_1 + L_2,$ we have that a common upperbound on the Lipschitz constant of $\g_j(\mathbf{W}), \,\, j=1,2,\ldots,p$ (say $L_{\rm row}$) can be given as, 
\[L_{\rm row} = {\norm{\a}_2 B_x} B_y L_{\sigma}' +  \sqrt{p}\norm{\a}_2^2 M_D^2 B_x^2 + {{p} \norm{\a}_2^2 B_x^2} M_D' B_{\sigma} + \lambda\]

Proceeding as in the case of $\h(\mathbf{W})$ above, we now  concatenate the above gradients $\g_j$ w.r.t the index $j$ in a vector form (of dimension $pd$) to get the following $pd-$dimensional gradient vector of the empirical loss, 
\[\nabla_{\mathbf{W}}\tilde{L} = \g(\mathbf{W}) \coloneqq \left[\g_1(\mathbf{W}),\g_2(\mathbf{W}),\ldots,\g_p(\mathbf{W})\right]\] and the Lipschitz constant of $\g$ - and hence the gradient Lipschitz constant for $\tilde{L}$ to be bounded as, 
\[{\rm gLip}(\tilde{L}) \leq \sqrt{p}\left({\norm{\a}_2 B_x} B_y L_{\sigma}' +  \sqrt{p}\norm{\a}_2^2 M_D^2 B_x^2 + {{p} \norm{\a}_2^2 B_x^2} M_D' B_{\sigma} + \lambda\right)\]

Thus we get the expression as required in the proof in Section \ref{sec:proof_mainthm}.

\section{Constants in the convergence guarantee}\label{sec:constants}

The constants $A, B$ are similar to those defined in \cite{weijie_sde}; following \cite{weijie_sde}, the constant $C$ in the guarantee in Theorem \ref{lem:error_bound} is defined as follows
\[C(s,\tilde{L}) = \left(\int_{\mathbb{R}^{p\times d}} (\tilde{L}(\mathbf{W}) - \min_{\mathbf{W}}\tilde{L}(\mathbf{W}))^2 \mu_s (\W) \, d\mathbf{W}\right)^{1/2}\]

for $\mu_s(\mathbf{W})$ being the Gibbs' measure, $\mu_s(\mathbf{W}) = \frac{1}{Z_{s}}\exp\left({-\frac{2\tilde{L}(\mathbf{W})}{s}}\right)$ with $Z_{s}$ being the normalization factor. 

For determining $\lambda_s$, \cite{weijie_sde} consider the function $V_s(\mathbf{W}) = \norm{\nabla \tilde{L}}^2 / s - \Delta \tilde{L}.$ Let $R_{0,s} > 0$ be large enough such that $V_s(\mathbf{W}) > 0$ for $\norm{\mathbf{W}}_F \geq R_{0,s}.$ For $R_s > R_{0,s},$ \cite{weijie_sde} define $\epsilon(R_s)$ as \[\epsilon(R_s) = \frac{1}{\inf\{V_s(\mathbf{W}) \, : \, \norm{\mathbf{W}}_F \geq R_s\}}\] where $R_s$ is assumed large enough such that $\int_{\norm{\mathbf{W}}_F \leq R_s} d\mu_s \geq 1/2.$ 
For $B_{R_s}$ as the ball of radius ${R_s}$ centered at origin in $\mathbb{R}^{p \times d}$, \cite{weijie_sde} define \[\mu_{s,R_s} = \left[\int_{\norm{\mathbf{W}}_F\leq R_s} d\mu_s(\mathbf{W}) \right]^{-1} \mu_s(\mathbf{W}) \mathbf{1}_{\norm{\mathbf{W}}_F\leq R_s}.\] Using the Poincar\'e inequality in a bounded domain[\cite{evans_pde}, Theorem 1, Chapter 5.8], \cite{weijie_sde} define the constant $C(R_s)$ to be s.t the the following holds $\forall h \in C_c^{\infty}(\mathbb{R}^d)$, 
\[\int_{\mathbf{W} \in \mathbb{R}^{p \times d}} h^2 d\mu_{s,R_s} \leq s \cdot C(R_s) \int_{\mathbf{W} \in  \mathbb{R}^{p \times d}} \norm{\nabla h}^2 \mu_{s,R_s} d\mu_{s,R_s} + \left(\int_{\mathbf{W} \in  \mathbb{R}^{p \times d}} h \,  d\mu_{s,R_s}\right)^2 \]

Then the key quantity $\lambda_s$ occurring in the aforementioned convergence guarantee for SGD was shown to be, \[\lambda_s = \frac{1 + 3s \left(\inf_{\mathbf{W} \in  \mathbb{R}^{p \times d}} V_s(\mathbf{W})\right) \epsilon(R_s)}{2(C(R_s) + 3\epsilon(R_s))}\]

\clearpage  
\appendix

\end{document}